\documentclass[10pt]{article}

\PassOptionsToPackage{dvipsnames}{xcolor}
\usepackage[accepted]{rlj}
\usepackage{amssymb,mathtools,amsthm,bm}
\usepackage{xspace}
\usepackage{tikz}
\usetikzlibrary{arrows.meta,positioning,calc}
\usepackage{thm-restate}

\definecolor{condD}{RGB}{44,123,182}
\definecolor{condR}{RGB}{215,48,39}
\definecolor{condF}{RGB}{45,122,30}
\definecolor{condM}{RGB}{117,82,160}
\definecolor{bellmanfill}{RGB}{255,247,230}
\colorlet{mixOS}{condD!50!condR}
\colorlet{mixAS}{condD!50!condF}
\colorlet{mixCS}{condR!50!condF}

\DeclareMathOperator*{\ext}{ext}

\def\equationautorefname#1#2\null{Eq.#1(#2\null)}

\newtheorem{definition}{Definition}

\newtheorem{lemma}{Lemma}
\newtheorem{proposition}{Proposition}
\newtheorem{corollary}{Corollary}

\newcommand{\condD}{\hyperref[cond:D]{Condition~D}\xspace}
\newcommand{\condR}{\hyperref[cond:R]{Condition~R}\xspace}
\newcommand{\condA}{\hyperref[cond:A]{Condition~A}\xspace}

\hypersetup{
  linkcolor  = BrickRed,
  citecolor  = MidnightBlue,
  urlcolor   = MidnightBlue,
  colorlinks = true,
}

\title{Generalised Bellman recurrence and three dualities in sequential decision-making}
\setrunningtitle{Generalised Bellman recurrence and three dualities in sequential decision-making}

\author{Fernando E. Rosas\textsuperscript{1-3}, David Hyland\textsuperscript{4}, and Daniel Polani\textsuperscript{5}
\\}

\emails{\small{f.rosas@sussex.ac.uk, david.hyland@cs.ox.ac.uk, d.polani@herts.ac.uk}
}

\affiliations{
$^{1}$\textbf{Department of Informatics, University of Sussex}\\
$^{2}$\textbf{Department of Brain Sciences and Centre for Complexity Science, Imperial College London}\\
$^{3}$\textbf{Centre for Eudaimonia and Human Flourishing, University of Oxford}\\
$^{4}$\textbf{Department of Computer Science, University of Oxford}\\
$^{5}$\textbf{Department of Computer Science, University of Hertfordshire}
}

\begin{document}
\maketitle

\begin{abstract}

What gives the Bellman equation its form? We show that the recursive properties of optimal value functions follow from three conditions: that the dynamics decomposes through sufficient statistics, that the return decomposes recursively, and that the aggregation of uncertainty is compatible with both. When all three conditions hold on a common state, the Bellman equation arises from their mutual consistency; when one fails, tractability can often be recovered by augmenting the state or by deforming return or dynamics. The same conditions are shown to give rise to three dualities: one between probability and return, one between return and aggregation, and one between aggregation and probability. 
Our framework reveals these dualities as arising from a single construction, unifying methods developed separately across reinforcement learning, control, and decision theory.
\end{abstract}

\section{Introduction}
\label{sec:introduction}

The recurrence relations satisfied by value functions and the fixed-point properties of optimal solutions lie at the heart of reinforcement learning. 
These properties are usually traced to the \emph{principle of optimality}, which states that optimal plans are made of optimal sub-plans \citep{Bellman1957}. 
These ideas are so central, and their applicability so broad, that Bellman recursion is often treated as the canonical expression of optimal decision-making in sequential settings. 

In this paper, we investigate the conditions that make the principle of optimality hold. 
We frame the answer as three requirements: 
that dynamics decompose through sufficient statistics (\condD), 
that returns decompose recursively (\condR), 
and that the aggregation of uncertainty is compatible with both (\condA). 
We then show that the Bellman equations arise when all three conditions are satisfied on a common state, and that many generalisations correspond to relaxing one condition and re-establishing it through state augmentation or by deforming returns or dynamics. 

These three conditions are found to give rise to three dualities, each holding one ingredient fixed while exchanging the roles of the other two. 
Specific instances of these dualities have been investigated by separate literatures --- control-as-inference, robust and regularised control, and risk-sensitive control --- yet the connections between these dualities remain unexplored. 
Our framework shows how these symmetries arise from a single construction, and how Bellman recursion results from their mutual compatibility. 
This leads to a unifying language that organises multiple frameworks across the reinforcement learning, control, and decision theory literatures, revealing connections between seemingly unrelated approaches to sequential decision-making.

\section{A general framework for sequential decision problems}
\label{sec:general-framework}

\begin{quote}
    \textit{An optimal policy has the property that whatever the initial state and initial decision are, the remaining decisions must constitute an optimal policy with regard to the state resulting from the first decision.} \citep{Bellman1957}. 
\end{quote}

Bellman’s principle of optimality is often taken as a starting point for deriving optimality relations. 
Instead, the aim of this paper is to formalise the conditions under which this principle holds and to explore the implications of those conditions.

To do this, let us consider an agent taking actions $a\in\mathcal{A}$ in a scenario with states $s\in\mathcal{S}$.\footnote{Uppercase letters (e.g. $X, Y$) are used to denote random variables and lowercase (e.g. $x, y$) their realisations, calligraphic letters (e.g. $\mathcal X ,\mathcal Y$) denote the sets over which they take values, the symbol $\Delta$ (as in $\Delta(\mathcal X)$) denotes the collection of all distributions over those sets, and boldface letters (e.g. $\mathbf{F}$) denote operators.}  
We denote decision histories as $h_{t}=(s_0,a_0,\dots,a_{t-1},s_t)$ in $\mathcal{H} := \mathcal S\times(\mathcal{A} \times \mathcal{S})^*$, and infinite trajectories as $h = (s_0, a_0, s_1, a_1, \ldots)$ in $\mathcal{T} := (\mathcal{S} \times \mathcal{A})^{\mathbb{N}}$. 
Let $\Pi$ be the class of all policies $\pi$ of the form $\pi:\mathcal{H}\to\Delta(\mathcal A)$, so that $A_t\sim\pi(h_{t})$ maps histories into distributions over actions.\footnote{By doing this, we follow a time-indexing convention such that $s_t$ happens before $a_t$.}   
Each policy $\pi$, together with the environment's dynamics and initial distribution over $S_0$, induces uncertainty over trajectories described by an \emph{\textbf{uncertainty functional}} $\mathbf{P}_\pi:\mathcal T\to\mathbb R$.

In this setting, we consider the following decision problem:
\begin{equation}
\label{eq:general-target}
\pi^{*} \;\in\; 
\arg\max_{\pi \in \Pi}
\; \mathbf{F}\Big[\, \mathbf{J}(H)
\;;\;
H\sim \mathbf{P}_\pi
\Big].
\end{equation}
Here, $\mathbf{J} : \mathcal{T} \to \mathcal{U}$ is a \textit{\textbf{return functional}} encoding preferences by assigning returns $u\in\mathcal{U}$ to each trajectory, where $\mathcal{U}$ is a totally or partially ordered set. 
Additionally, $\mathbf{F}:\Delta(\mathcal{U}) \to \mathbb{R}$ is an \textit{\textbf{aggregation functional}} that specifies how uncertainty is treated, mapping the distribution $\mathbf{J}(H)$ under $\mathbf{P}_\pi$ to a real number. 
Thus, the triple $\langle \mathbf{P}_\pi, \mathbf{J}, \mathbf{F}\rangle$ encodes 
\emph{how trajectories are generated}, \emph{how each trajectory is evaluated},
and \emph{how those evaluations are aggregated}, respectively.

On its own, \autoref{eq:general-target} imposes no
conditions on $\mathbf{P}_\pi$, 
$\mathbf{J}$, or $\mathbf{F}$. It
subsumes finite- and infinite-horizon problems, deterministic and
stochastic dynamics, additive and non-additive utilities, and risk-neutral,
risk-sensitive, and robust aggregations. 
The price of this generality is computational: to evaluate the objective in \autoref{eq:general-target} requires aggregating $\mathbf{J}$ over the entire trajectory distribution, and optimising it requires doing so for every policy $\pi\in\Pi$. 
The rest of this section identifies structural assumptions under which this problem reduces to a local recursion, leading to Bellman's principle.

\subsection{Three conditions for Bellman recursion}
\label{subsec:three-conditions}

To make \autoref{eq:general-target} tractable, 
we now formulate conditions on the statistics $\mathbf{P}_\pi$, returns $\mathbf{J}$, and aggregation $\mathbf{F}$ under which the
optimisation admits a Bellman recursion. 
We state each in turn before combining them in \autoref{subsec:bellman-recovery}. 
Below, we consider functions over histories $\phi(h_{t})$ that are \emph{unifilar} (i.e., recursively updatable), so that $\phi(h_{t+1})$ is determined from $\phi(h_{t})$, $a_{t}$, and $s_{t+1}$.

\paragraph{Condition D (history sufficiency).}\label{cond:D}
There exists a unifilar map $\phi_D : \mathcal{H} \to \mathcal{Z}_D$ into
a dynamics latent space $\mathcal{Z}_D$ and a suitable kernel $K:\mathcal{Z}_D\times\mathcal A\to \Delta(\mathcal{S})$  such that the distribution of trajectories under a policy $\pi$ 
factorises as 
\begin{align}
\label{eq:cond-D1}
\mathbf{P}_\pi(h_t) 
&= 
\rho(s_0)
\prod_{\tau= 0}^{t-1}
\pi(a_\tau \,\big|\, h_{\tau})\;
K\bigl(s_{\tau+1} \,\big|\, \phi_D(h_{\tau}),a_\tau\bigr),
\end{align}
with $\rho$ a distribution over $S_0$. 
This implies that the one-step transition depends on the history only through
the sufficient statistic $\phi_D(h_{t})=z_t$. 
The Markov property is the special case where $z_t = s_{t}$. In partially observed settings, this condition still holds by setting $z_t$ to be a belief state~\citep{KaelblingLittmanCassandra1998}.  
Thus, \condD is always met (even in non-stationary settings), but it becomes non-trivial when considering non-probabilistic notions of uncertainty (see \autoref{app:condition-d}).

\paragraph{Condition R (return compositionality).}\label{cond:R}
There exist a unifilar map $\phi_R : \mathcal{H} \to \mathcal{Z}_R$ into
a return latent space $\mathcal{Z}_R$, a local return function
$\ell : \mathcal{Z}_R \times \mathcal{A} \times \mathcal{S} \to \mathcal{U}$,
and an associative binary operator $\oplus : \mathcal{U} \times
\mathcal{U} \to \mathcal{U}$ such that the return functional admits the decomposition
\begin{equation}
\label{eq:cond-R}
\mathbf{J}(h) \;=\; \bigoplus_{\tau \ge 0}
\ell\bigl(\phi_R(h_{\tau}),\, a_\tau,\, s_{\tau+1}\bigr),
\end{equation}
where $\bigoplus_{\tau \ge 0} A_\tau := A_0 \oplus A_1 \oplus \dots$. 
This condition implies that the one-step return depends on the history only through the sufficient statistic $\phi_R(h_{t})$. 
For example, if $\ell$ depends only on $(s_t, a_t)$
and there is no time dependence, $\phi_R$ can be taken to be a constant and \autoref{eq:cond-R} leads to memoryless rewards. 
In contrast, $\ell_t = \gamma^t r(s_t, a_t)$ requires $\phi_R$ to track the running
discount $\gamma^t$.

\paragraph{Condition A (aggregation compatibility).}\label{cond:A}
The aggregation functional $\mathbf{F}$ must be compatible with the 
compositions induced by \condD 
and \condR:

\begin{itemize}
\item \emph{Kernel-composition compatibility (A1)}: For any kernels $K_1 : \mathcal{X} \to \Delta(\mathcal{Y})$ and
$K_2 : \mathcal{Y} \to \Delta(\mathcal{Z})$,
\begin{equation}
\label{eq:cond-A1}
\mathbf{F}_{Z\sim K_2\circ K_1(\cdot|x)}\big[Z\big] 
=
\mathbf{F}_{Y\sim K_1(\cdot|x)}\Big[
\mathbf{F}_{Z\sim K_2(\cdot|Y)}\big[
Z
\big]
\Big].
\end{equation}
Thus, the map $K \mapsto \mathbf{F}_K$ satisfies 
$\mathbf{F}_{K_2 \circ K_1} \;=\; \mathbf{F}_{K_1} \circ \mathbf{F}_{K_2}$.\footnote{For convenience, we sometimes use the shorthand notation $\mathbf{F}_{X\sim K}[X]:=\mathbf{F}[X\,;\,X\sim K]$, where $K\in\Delta(\mathcal X)$.}

\item \emph{Return compatibility (A2)}: 
There exists an binary operator $\odot : \mathcal{U} \times
\mathbb{R} \to \mathbb{R}$ such that\begin{equation}
\mathbf F_{Y\sim K}
\bigl[
u\oplus Y
\bigr]
=
u\odot
\mathbf F_{Y\sim K}[Y],
\qquad
\forall u\in\mathcal{U}.
\end{equation}
Thus, the aggregation functional respects return composition for non-random terms. 
\end{itemize}

If $\mathbf{F} = \mathbb{E}$ is the expectation and $\oplus = \odot = +$, then 
(A1) is satisfied by the tower property of expectation and (A2) is satisfied by linearity. 
More generally, \condA is also satisfied by the log-sum-exp aggregation $\hat{\mathbf{F}}^\beta[Z] = \beta^{-1}\log\mathbb{E}[\exp(\beta Z)]$~\citep{morales2023thermodynamics}. 

It is useful to note that \condD and \condR can be expressed recursively. \autoref{eq:cond-D1} can be expressed as the following two-stage factorisation
\begin{equation}
\label{eq:cond_dist}
\mathbf{P}_\pi(h \mid h_{t}) 
\;=\;
\Big[\pi(a_t\mid h_{t}) K(s_{t+1} \mid \phi_D(h_{t}),a_t) 
\Big]
\times \mathbf{P}_\pi(h \mid h_{t+1}).
\end{equation}
Similarly, \autoref{eq:cond-R} can be expressed as
\begin{equation}
\label{eq:cond-R-recursion}
\mathbf{J}^{(t)}(h) \;=\;
\ell\bigl(\phi_R(h_{t}),\, a_t,\, s_{t+1}\bigr) \;\oplus\;
\mathbf{J}^{(t+1)}(h),
\end{equation}
where $\mathbf{J}^{(t)}(h) := \bigoplus_{\tau \ge t} \ell(\phi_R(h_{\tau}), a_{\tau}, s_{\tau+1})$
is the tail return from step $t$, with $\mathbf{J}^{(0)} = \mathbf{J}$.

\subsection{Value function and Bellman recursion}
\label{subsec:bellman-recovery}

Let us now introduce value functions for this general setting. 

\begin{definition}
\label{def:history-value}
The value of history $h_{t} \in \mathcal{H}$ under policy $\pi \in \Pi$ is
\begin{equation}
\label{eq:history-value}
V_\pi\big(h_{t}\big) 
\;:=\; 
\mathbf{F}
\Big[\,
\mathbf{J}^{(t)}\big(H\big) 
\,;\, 
H \sim \mathbf{P}_\pi(\,\cdot \mid h_{t})
\,
\Big],
\end{equation}
where 
$\mathbf{P}_\pi(h| h_{t})$ is the distribution defined in \autoref{eq:cond_dist} and
$\mathbf{J}^{(t)}$ is the
tail return as defined in \autoref{eq:cond-R-recursion}. 
\end{definition}

We now show how the above conditions jointly give rise to Bellman recursion. 
For this, let us define the \emph{Bellman state} $z_t\in \mathcal{Z}:=\mathcal{Z}_D\times\mathcal{Z}_R$ as 
\begin{equation}
\label{eq:bellman-state}
z_t^D := \phi_D(h_{t}),
\qquad
z_t^R := \phi_R(h_{t}),
\qquad
z_t := (z_t^D,z_t^R).
\end{equation}
Let us also introduce $\phi:=(\phi_D,\phi_R)$ with $\phi:\mathcal{H}\to\mathcal{Z}$,
so that $z_t = \phi(h_{t})$. 
The unifilarity of $\phi_D$ and $\phi_R$ (assumed in \condD and \condR) imply that $\phi$ itself is unifilar, so that there exists a map
$\nu : \mathcal{Z} \times \mathcal{A} \times \mathcal{S} \to \mathcal{Z}$
such that
\begin{equation}
\label{eq:phi-forward-closure}
\phi(h_{t+1}) \;=\; 
\nu\bigl(\phi(h_{t}),\, a_{t},\, s_{t+1}\bigr),
\qquad \forall h_{t} \in \mathcal{H},\, a_t \in \mathcal{A},\, s_{t+1} \in \mathcal{S}.
\end{equation}
We consider the subclass of policies of the form
\begin{equation}
\Pi_\phi \;:=\; \Bigl\{\pi \in \Pi \,:\,
\phi(h_{t}) = \phi(h'_{t}) \,\Longrightarrow\,
\pi(\cdot \mid h_{t}) = \pi(\cdot \mid h'_{t}), \forall h_{t},h'_{t},t\geq 0
\Bigr\}.
\end{equation}
Each $\pi\in\Pi_\phi$ factors through $\phi$ as $\pi(\cdot|h_{t}) = \bar{\pi}(\cdot|z_t)$ for a unique $\bar{\pi}:\mathcal{Z}\to\Delta(\mathcal{A})$. 
If \condD and \condR hold, then $\phi(h_{t})=\phi(h'_{t})$ implies
$V_\pi(h_{t})=V_\pi(h'_{t})$ for all $\pi\in\Pi_\phi$ (see \autoref{lem:value-factorisation}, proved in App.~\ref{sec:appendix_proof_lemma}). This implies that $V_\pi$ also factors through $\phi$ as $V_\pi(h_{t})=\bar V_\pi(\phi(h_{t}))$ for a unique $\bar V_\pi:\mathcal{Z}\to\mathbb{R}$. 
This leads to a corresponding Q-function
\begin{equation}
\label{eq:q-function}
\big(\mathbf{B} \bar V\big)(z,a)
\;:=\;
\mathbf F_{S\sim K(\cdot\mid z^D,a)}
\Big[
\ell(z^R,a,S)\odot \bar{V} \big(\nu(z,a,S)\big)
\Big],
\end{equation}
with $\nu$ as defined in \autoref{eq:phi-forward-closure}. 
One can use this to define $(\mathbf{T}_\pi\bar V)(z): = \mathbf{F}_{
A\sim\bar{\pi}(\cdot \mid z)
}
\!\bigl[\, 
\big(\mathbf{B} \bar V\big)(z,A)
\,\bigr]$ as a generalised policy evaluation operator for $\pi\in\Pi_\phi$.

\begin{restatable}[Generalised Bellman recursion]{theorem}{generalisedbellman}
\label{thm:generalised-bellman}

Assume \condD, \condR, and \condA hold. 
Then, $\bar V_\pi$ for $\pi \in \Pi_\phi$
satisfies the following fixed-point condition:
\begin{equation}
\label{eq:generalised-bellman}
\bar V_\pi(z) 
\;=\;
(\mathbf{T}_\pi\bar V_\pi)(z),
\quad\forall z\in\mathcal{Z}.
\end{equation}
\end{restatable}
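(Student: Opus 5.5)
We prove the fixed-point identity by fixing $\pi\in\Pi_\phi$ and a history $h_t$ with $\phi(h_t)=z$, and unfolding $V_\pi(h_t)$ by exactly one step of the two-stage factorisation \autoref{eq:cond_dist}. The factorisation writes $\mathbf{P}_\pi(\cdot\mid h_t)$ as a composition of three kernels: first $A_t\sim\pi(\cdot\mid h_t)=\bar\pi(\cdot\mid z)$, then $S_{t+1}\sim K(\cdot\mid z^D,A_t)$ by \condD, and finally the continuation $\mathbf{P}_\pi(\cdot\mid h_{t+1})$ with $h_{t+1}=(h_t,A_t,S_{t+1})$. Applying the kernel-composition property (A1) twice will give
\begin{equation*}
V_\pi(h_t)=\mathbf{F}_{A\sim\bar\pi(\cdot\mid z)}\Big[\mathbf{F}_{S\sim K(\cdot\mid z^D,A)}\Big[\mathbf{F}_{H\sim\mathbf{P}_\pi(\cdot\mid h_t,A,S)}\big[\mathbf{J}^{(t)}(H)\big]\Big]\Big].
\end{equation*}

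Next we rewrite the innermost aggregation using \condR. Once $a$ and $s$ are fixed, the recursion \autoref{eq:cond-R-recursion} gives $\mathbf{J}^{(t)}(H)=\ell(z^R,a,s)\oplus\mathbf{J}^{(t+1)}(H)$ almost surely under $\mathbf{P}_\pi(\cdot\mid h_{t+1})$. In this expression the first summand $u=\ell(z^R,a,s)$ is non-random. Return compatibility (A2) therefore pulls it outside the aggregation as $u\odot\mathbf{F}[\mathbf{J}^{(t+1)}(H)]$. By Definition~\ref{def:history-value} the remaining term is exactly $V_\pi(h_{t+1})$.

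We then pass from histories to the Bellman state. By \autoref{lem:value-factorisation}, $V_\pi(h_{t+1})=\bar V_\pi(\phi(h_{t+1}))$. By unifilarity \autoref{eq:phi-forward-closure}, $\phi(h_{t+1})=\nu(z,a,s)$. Substituting these, the middle aggregation becomes $(\mathbf{B}\bar V_\pi)(z,a)$ as defined in \autoref{eq:q-function}, and the outer aggregation becomes $(\mathbf{T}_\pi\bar V_\pi)(z)$. Finally, the left-hand side $V_\pi(h_t)$ equals $\bar V_\pi(z)$, again by the lemma. The identity then holds for every $z$ in the image of $\phi$; on any $z$ outside that image, $\bar V_\pi$ is unconstrained, and we either restrict to reachable $z$ or define $\bar V_\pi$ there through the same recursion.

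The main obstacle is justifying the first step: (A1) is stated for two kernels, whereas $\mathbf{P}_\pi(\cdot\mid h_t)$ is a kernel on infinite trajectories. We must therefore view the continuation $h_{t+1}\mapsto\mathbf{P}_\pi(\cdot\mid h_{t+1})$ as a single kernel from $\mathcal{H}$ to $\Delta(\mathcal{T})$ and compose it with the one-step kernel $(a,s)$. This requires the integrand $\mathbf{J}^{(t)}$ to be a well-defined function of the terminal variable, so that the composite kernel's output is the trajectory itself and the return is evaluated on it. We would handle this by applying (A1) with $K_1(\cdot\mid h_t)$ the joint law of $(A_t,S_{t+1})$, itself split by a second use of (A1) into $\bar\pi$ followed by $K$, and $K_2$ the continuation kernel. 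We would also note the implicit assumption that the infinite $\oplus$-sum in $\mathbf{J}^{(t)}$ is well defined, for instance convergent, so that the recursion \autoref{eq:cond-R-recursion} holds pointwise.
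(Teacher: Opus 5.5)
Your proposal is correct and follows essentially the same route as the paper. You unfold $\mathbf{P}_\pi(\cdot\mid h_t)$ one step via \condD, nest the aggregations with (A1), use the \condR recursion and (A2) to pull out the deterministic local return $\ell(z^R,a,s)$, and invoke \autoref{lem:value-factorisation} with unifilarity to replace $V_\pi(h_{t+1})$ by $\bar V_\pi(\nu(z,a,s))$.

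The only difference is ordering: you split the action--state kernel with (A1) at the outset, whereas the paper works with the joint kernel $\widetilde K$ and splits it in the final step. Your caveats about $z$ outside the image of $\phi$, and about reading (A1) through push-forward laws of $\mathbf{J}^{(t)}$, are points the paper leaves implicit.
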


\begin{corollary}[Bellman expectation equation]
\label{cor:classical-bellman-expectation}
Applying \autoref{thm:generalised-bellman} to
$\mathbf{F} = \mathbb{E}$, $\oplus = \odot = +$,
$\phi_D(h_{t}) = s_{t}$,
$\phi_R(h_{t}) = (s_{t}, \gamma^t)$,
and $\ell\bigl((s, \gamma^t), a, s'\bigr) = \gamma^t\, r(s, a)$ reduces  \autoref{eq:generalised-bellman} to
\begin{equation}
\label{eq:classical-bellman-expectation}
\bar V_\pi(s) \;=\; \mathbb{E}_{A \sim \pi(\cdot \mid s),\; S' \sim K(\cdot \mid s, A)}\!\Bigl[\,r(s, A) \;+\; \gamma\, \bar V_\pi(S')\,\Bigr],
\quad
\forall s\in\mathcal{S}.
\end{equation}
\end{corollary}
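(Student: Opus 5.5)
The plan is to instantiate \autoref{thm:generalised-bellman} with the stated choices and then remove the dependence on the running discount through a scaling identity. We take $\mathbf{F}=\mathbb{E}$, $\oplus=\odot=+$, $\phi_D(h_t)=s_t$, $\phi_R(h_t)=(s_t,\gamma^t)$ and $\ell((s,\gamma^t),a,s')=\gamma^t r(s,a)$. We also assume, as the statement's notation $\pi(\cdot\mid s)$ indicates, that the policy depends on the history only through the current state $s_t$. We take $r$ bounded and $\gamma\in[0,1)$, so that the infinite sums defining $\mathbf{J}^{(t)}$ converge absolutely.

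\textbf{Step 1: the hypotheses hold.} \condD holds because $K$ is a Markov kernel in $s_t$. \condR holds because $\mathbf{J}(h)=\sum_{\tau}\gamma^\tau r(s_\tau,a_\tau)$, and $\phi_R$ is unifilar, since $(s_{t+1},\gamma^{t+1})$ is obtained from $(s_t,\gamma^t)$ together with $s_{t+1}$. \condA is the tower property (A1) together with linearity (A2), as already noted after \condA. The Bellman state is then effectively $z_t=(s_t,\gamma^t)$, and the update map is $\nu((s,c),a,s')=(s',\gamma c)$. Since $\pi$ depends only on $s_t$, it lies in $\Pi_\phi$.

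\textbf{Step 2: the scaling identity.} This is the main obstacle, because the theorem produces a value function $\bar V_\pi(s,c)$ on the augmented state, whereas \autoref{eq:classical-bellman-expectation} involves a function of $s$ alone. We will show that
\[
\bar V_\pi(s,\gamma^t)=\gamma^t\,\bar V_\pi(s,1).
\]
The argument runs as follows. Under $\mathbf{P}_\pi(\cdot\mid h_t)$, the factorisation \autoref{eq:cond_dist} shows that the future sequence $(s_{t+k},a_{t+k})_{k\ge0}$ has a law that depends only on $s_t$, because $\pi$ is stationary and $K$ is time-homogeneous. Moreover, this law coincides with the law of $(s_k,a_k)_{k\ge0}$ started from $s_0=s_t$. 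We then write the tail return as
\[
\mathbf{J}^{(t)}=\gamma^t\sum_{k\ge0}\gamma^k r(s_{t+k},a_{t+k}).
\]
Pulling the constant $\gamma^t$ out of the expectation, which is justified by linearity and dominated convergence, gives the identity. We therefore define $\bar V_\pi(s):=\bar V_\pi(s,1)$.

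\textbf{Step 3: evaluate the fixed-point equation at $z=(s,1)$.} By \autoref{eq:q-function} and the definition of $\mathbf{T}_\pi$,
\[
\bar V_\pi(s,1)=\mathbb{E}_{A\sim\pi(\cdot\mid s)}\,\mathbb{E}_{S'\sim K(\cdot\mid s,A)}\bigl[r(s,A)+\bar V_\pi(S',\gamma)\bigr].
\]
Step 2 gives $\bar V_\pi(S',\gamma)=\gamma\,\bar V_\pi(S')$. Merging the two nested expectations into a single joint expectation over $(A,S')$ then yields \autoref{eq:classical-bellman-expectation}.
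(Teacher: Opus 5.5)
Your proposal is correct and follows the paper's route: the paper gives no separate proof and simply asserts that instantiating \autoref{thm:generalised-bellman} with these choices ``reduces'' to \autoref{eq:classical-bellman-expectation}. Two steps the paper leaves implicit are needed to remove the running-discount component of the Bellman state $(s,\gamma^t)$: your scaling identity $\bar V_\pi(s,\gamma^t)=\gamma^t\,\bar V_\pi(s,1)$, and your explicit restriction to stationary policies $\pi(\cdot\mid s)$, without which that identity would fail. You supply both.
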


\subsection{Generalised Bellman optimality}

To go from recursion to optimality, let us define the \emph{generalised Bellman optimality} operator
\begin{equation}
\label{eq:bellman-operator}
    (\mathbf{T}_*\bar V)(z): =     
    \sup_{\mu\in\Delta(\mathcal A)}
    \mathbf{F}_{
    A\sim\mu
    }
    \Bigl[\, 
    \big(\mathbf{B} \bar V\big)(z,A)
    \,\Bigr].
\end{equation}
We now focus on aggregator operators that are monotonic, which satisfy
\begin{equation}
\label{eq:cond-M}
f(u) \le g(u), \;\forall u\in\mathcal U
\;\Longrightarrow\;
\mathbf{F}_{U \sim K}\big[\,f(U)\,\big]
\;\le\;
\mathbf{F}_{U \sim K}\big[\,g(U)\,\big]
\end{equation}
for  
functions $f, g : \mathcal{U} \to \mathcal{U}$. 
This condition is satisfied by many aggregators of interest 
(expectation, log-sum-exp, sup, inf, robust min-max).
We also focus on monotonic operations $\odot$ satisfying
\begin{equation}
    x\leq y 
    \;\Longrightarrow\;
u\odot x\leq
u\odot y,
    \quad
    \forall u\in\mathcal{U}.
\end{equation}
Define the optimal value function $V^*(h_{t}):=\sup_{\pi\in\Pi}V_\pi(h_{t})$. By
\autoref{lem:opt-factorisation}, 
there exists a function $\bar V^*:\mathcal Z\to\mathbb R$ such that 
$V^*(h_{t}) = \bar V^*\big(\phi(h_{t})\big)$. 

\begin{restatable}[Generalised Bellman optimality]{theorem}{bellmanoptimality}
\label{thm:bellman-optimality}
Assume \condD, \condR, and \condA hold, and $F$ and $\odot$ are
monotonic. If $\bar V^*$ is attained, then
\begin{equation}
\label{eq:bellman_optimality}
  \bar V^*(z) 
  = 
  (\mathbf{T}_*\bar V^*)(z),
  \qquad \forall z\in Z .
\end{equation}
\end{restatable}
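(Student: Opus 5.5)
We prove the two inequalities $\bar V^*\le \mathbf{T}_*\bar V^*$ and $\bar V^*\ge \mathbf{T}_*\bar V^*$ separately, working at the level of histories and then passing to $\mathcal Z$ via \autoref{lem:opt-factorisation}. The starting point is a one-step decomposition of $V_\pi(h_t)$ valid for \emph{any} $\pi\in\Pi$, not only $\pi\in\Pi_\phi$. Using the two-stage factorisation \autoref{eq:cond_dist} and (A1), we write $V_\pi(h_t)$ as an outer aggregation over $A\sim\pi(\cdot\mid h_t)$, then $S'\sim K(\cdot\mid z^D_t,A)$, of an inner aggregation over $\mathbf{P}_\pi(\cdot\mid h_{t+1})$. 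The tail recursion \autoref{eq:cond-R-recursion} and (A2) then pull the non-random term $\ell(z^R_t,A,S')$ out as $\ell(z^R_t,A,S')\odot V_\pi(h_t,A,S')$. This is the same computation that underlies \autoref{thm:generalised-bellman}, so we reuse it with $V_\pi(h_{t+1})$ in place of $\bar V_\pi(\nu(z,a,s'))$.

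\emph{Upper bound.} For any $\pi\in\Pi$ we have $V_\pi(h_{t+1})\le V^*(h_{t+1})=\bar V^*(\nu(z_t,A,S'))$. Monotonicity of $\odot$ and then of $\mathbf{F}$ (applied twice, first inside the $S'$-aggregation and then inside the $A$-aggregation) gives
\[
V_\pi(h_t)\le \mathbf{F}_{A\sim\pi(\cdot\mid h_t)}\bigl[(\mathbf{B}\bar V^*)(z_t,A)\bigr].
\]
The right-hand side is at most $(\mathbf{T}_*\bar V^*)(z_t)$, since $\mu=\pi(\cdot\mid h_t)\in\Delta(\mathcal A)$ is an admissible choice in the supremum. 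Taking the supremum over $\pi$ yields $\bar V^*(z_t)\le(\mathbf{T}_*\bar V^*)(z_t)$.

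\emph{Lower bound.} Fix $z=\phi(h_t)$, $\mu\in\Delta(\mathcal A)$, and $\varepsilon>0$. Since $\bar V^*$ is attained, for each continuation history $h_{t+1}$ there is a policy $\pi^{h_{t+1}}$ with $V_{\pi^{h_{t+1}}}(h_{t+1})=V^*(h_{t+1})$; ideally a single maximiser $\pi^\circ$ attains $V^*$ at every history. We then splice policies: define $\tilde\pi$ to play $\mu$ at $h_t$ and to follow $\pi^\circ$ (or the relevant $\pi^{h_{t+1}}$) thereafter. Because $V_{\tilde\pi}(h_{t+1})$ depends only on the behaviour of $\tilde\pi$ after $h_{t+1}$, we get $V_{\tilde\pi}(h_{t+1})=\bar V^*(\nu(z,a,s'))$. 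The one-step decomposition then gives $V_{\tilde\pi}(h_t)=\mathbf{F}_{A\sim\mu}[(\mathbf{B}\bar V^*)(z,A)]$. Hence $\bar V^*(z)\ge$ this quantity for every $\mu$, and so $\bar V^*\ge\mathbf{T}_*\bar V^*$.

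The main obstacle is this lower bound, specifically the splicing step. It requires checking that $V_{\tilde\pi}(h_{t+1})$ depends only on the conditional law $\mathbf{P}_{\tilde\pi}(\cdot\mid h_{t+1})$, which holds by \autoref{eq:cond_dist} since that law involves only $\pi$ at histories extending $h_{t+1}$. It also requires that the choice $h_{t+1}\mapsto\pi^{h_{t+1}}$ yields a well-defined policy in $\Pi$. This is immediate because $\Pi$ consists of arbitrary history-dependent maps, with the continuation histories beginning at distinct $h_{t+1}$ being disjoint. Attainment is exactly what removes the $\varepsilon$-argument: without it one would need the operator $\odot$ and $\mathbf{F}$ to be continuous in order to pass $\varepsilon$-optimal continuations through the aggregation. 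Finally, we use \autoref{lem:opt-factorisation} to confirm that both sides depend on $h_t$ only through $z$, so that the identity holds on all of $\mathcal Z$, at least on the image of $\phi$.
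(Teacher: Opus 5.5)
Your proposal is correct and follows the paper's proof essentially step for step. The shared steps are a one-step history-level recursion valid for every $\pi\in\Pi$ (the paper's Lemma~\ref{prop:history-recursion}), an upper bound from $V_\pi\le V^*$ plus monotonicity of $\odot$ and $\mathbf{F}$ with $\mu=\pi(\cdot\mid h_t)$, a lower bound by splicing $\mu$ at $h_t$ with attaining continuation policies, and factorisation through $\phi$ via Lemma~\ref{lem:opt-factorisation}. The $\varepsilon$ you fix is never used and can be dropped, and your remark that the identity is really established on the image of $\phi$ is a fair sharpening of the paper's ``$\forall z\in Z$''.
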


\begin{corollary}[Bellman optimality equation]
\label{cor:classical-bellman}
If $\mathbf{F} = \mathbb{E}$, $\oplus = +$, 
$\phi_D(h_{t}) = s_t$, $\phi_R(h_{t}) = (s_t,\gamma^t)$, and
$\ell\big((s,\gamma^t), a, s'\big) = \gamma^t r(s, a)$, then \autoref{eq:bellman_optimality} reduces to
\begin{equation}
\label{eq:classical-bellman}
\bar{V}^{*}(s) \;=\; \sup_{a \in \mathcal{A}}\, \Bigl\{\,
r(s, a) \,+\, \gamma\, \mathbb{E}_{S' \sim K(\cdot \mid s, a)}\big[\, \bar{V}^{*}(S')\,\big]
\,\Bigr\}.
\end{equation}
\end{corollary}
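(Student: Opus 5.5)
The statement immediately above is Corollary~\ref{cor:classical-bellman}, so the plan is to specialise \autoref{thm:bellman-optimality} to the stated choices and then simplify the operator $\mathbf{T}_*$. First I will check the hypotheses. \condD holds with $\phi_D(h_t)=s_t$ because the dynamics are Markov. \condR holds with $\phi_R(h_t)=(s_t,\gamma^t)$, which is unifilar since $(s_{t+1},\gamma^{t+1})$ is computed from $(s_t,\gamma^t)$ and $s_{t+1}$, and $\ell=\gamma^t r(s_t,a_t)$ gives $\mathbf{J}=\sum_t\gamma^t r$. \condA holds for $\mathbb{E}$ with $\odot=+$, as the paper already notes. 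Expectation and $+$ are both monotone. With these in hand, \autoref{thm:bellman-optimality} gives $\bar V^*=\mathbf{T}_*\bar V^*$ on $\mathcal{Z}=\mathcal{S}\times(\mathcal{S}\times\{\gamma^t\})$.

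The substantive step is to collapse the redundant Bellman state to $s$. The time index enters the tail return multiplicatively: $\mathbf{J}^{(t)}=\gamma^t\sum_{\tau\ge t}\gamma^{\tau-t}r(s_\tau,a_\tau)$. By the Markov property, the optimal tail value from $(s,\gamma^t)$ therefore satisfies
\[
\bar V^*(s,(s,\gamma^t))=\gamma^t W(s),
\]
where $W(s)$ is the optimal value of the undiscounted-restart problem from $s$. I will show this by a shift argument: a policy from time $t$ corresponds to a policy from time $0$, so the same supremum appears up to the factor $\gamma^t$. Identifying $\bar V^*(s)$ in the corollary with $W(s)$, i.e.\ the $t=0$ slice, is the normalisation convention the corollary implicitly uses.

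Next I substitute into \autoref{eq:bellman-operator}. Since $\nu(z,a,s')=(s',(s',\gamma^{t+1}))$, we get
\[
(\mathbf{B}\bar V^*)(z,a)=\gamma^t r(s,a)+\gamma^{t+1}\mathbb{E}_{S'\sim K(\cdot|s,a)}[W(S')].
\]
The supremum over $\mu\in\Delta(\mathcal A)$ of a linear functional $\mathbb{E}_{A\sim\mu}[\cdot]$ equals $\sup_a$, because any mixture is bounded by the supremum over point masses, and Dirac measures are admissible. Dividing by $\gamma^t>0$ then yields \autoref{eq:classical-bellman}.

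I expect the main obstacle to be making the scaling identity $\bar V^*(\cdot,\gamma^t)=\gamma^t W$ rigorous. It requires that $\sup_\pi$ over history-dependent policies commutes with the positive scaling and the time shift. That holds because the policy class from time $t$ onward is isomorphic to $\Pi$ via relabelling of histories, and because \autoref{lem:opt-factorisation} already ensures the value depends on the history only through $(s_t,\gamma^t)$. Attainment of $\bar V^*$ is inherited directly from the theorem's hypothesis.
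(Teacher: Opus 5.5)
Your proposal is correct and follows the route the paper takes implicitly: instantiate \autoref{thm:bellman-optimality} with $\mathbf{F}=\mathbb{E}$ and $\oplus=\odot=+$, check Conditions D, R, A and monotonicity, then reduce $\sup_{\mu\in\Delta(\mathcal A)}$ to $\sup_{a}$. The paper gives no separate proof and silently collapses the Bellman state $(s,(s,\gamma^t))$ to $s$; your time-shift identity $\bar V^*(s,(s,\gamma^t))=\gamma^t W(s)$ supplies exactly that missing step, and evaluating at the $t=0$ slice needs only its $t=1$ instance, so no division by $\gamma^t$ is actually required.
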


\begin{figure}
    \centering
    \begin{tikzpicture}[
  >=Stealth, font=\small,
  vtx/.style={circle, draw=#1, line width=1.4pt, fill=#1!17, align=center,
              minimum size=1.2cm, inner sep=1pt, font=\large},
  pill/.style={rounded corners=4pt, fill=#1!2, draw=#1, line width=1pt,
               text=#1, font=\footnotesize, align=center, inner sep=3pt},
  clab/.style={font=\scriptsize, align=center},
  dl/.style={line width=2pt, #1, shorten >=2pt, shorten <=2pt},
]
\def\R{3.15}
\node[vtx=condD] (P) at (90:\R/2)   {$\mathbf{P}_\pi$};
\node[vtx=condR] (J) at (210:\R)  {$\mathbf{J}$};
\node[vtx=condF] (F) at (-30:\R)  {$\mathbf{F}$};
\node[clab, text=condD, above=1pt of P] {\textbf{Condition D}\\ history sufficiency};
\node[clab, text=condR, below=1pt of J] {\textbf{Condition R}\\ return compositionality};
\node[clab, text=condF, below=1pt of F] {\textbf{Condition A}\\ aggregation compatibility};

\draw[dl=mixOS,<->] (P) -- (J);   \draw[dl=mixAS,<->] (P) -- (F);   \draw[dl=mixCS,<->] (J) -- (F);   

\node[pill=mixOS, anchor=east] at ($(P)!0.5!(J)+(-0.55,0.30)$)
      {\textbf{Object duality}\\[1pt] $\mathbf{P}_\pi \!\leftrightarrow\! \mathbf{J}$\\[1pt]
       {\scriptsize($\mathbf{F}=\mathbb{E}$ fixed)}};
\node[pill=mixAS, anchor=west] at ($(P)!0.5!(F)+(0.55,0.30)$)
      {\textbf{Representation duality}\\[1pt] $\mathbf{P}'_\pi \!\leftrightarrow\! \tilde{\mathbf{F}}^g$\\[1pt]
       {\scriptsize($\mathbf{J}$ fixed)}};
\node[pill=mixCS] at ($(J)!0.5!(F)+(0,-1.0)$)
      {\textbf{Assessment duality}\\[1pt] 
      $u \circ \mathbf{J} \!\leftrightarrow\! \hat{\mathbf{F}}^u$\\[1pt]
      $\mathbf{J}+\text{Reg} \!\leftrightarrow\! \hat{\mathbf{F}}^\beta$\\[1pt]
       {\scriptsize($\mathbf{P}_\pi$ fixed)}};
\end{tikzpicture}
    \caption{\small{\textbf{Three dualities in reinforcement learning. } These can be arranged as a triangle, where each edge is a duality holding the opposite vertex fixed and exchanging the other two. \emph{Object duality} relates the dynamics and return using the bilinearity of the expectation; 
    \emph{assessment duality} exchanges non-linearity of aggregation and return curvature or regularisation; and \emph{representation duality} exchanges deformations of the probabilities.}}\label{fig:duality}
\end{figure}

\section{Three dualities in the Bellman structure}
\label{sec:three-fold}

After clarifying how the principle of optimality arises from conditions over dynamics, returns, and aggregation, a natural next question is whether these conditions play comparable roles in the resulting recursion. 
Here we investigate three dualities between these elements, which are illustrated in \autoref{fig:duality}. 
For simplicity, we focus on the case of $\mathcal{U}=\mathbb{R}$ and $\oplus=\odot=+$.

\subsection{Object duality: statistics--vs--returns}
\label{subsec:log-transform}

Classical decision theory describes preferences as arising from two distinct entities: probabilities representing \emph{beliefs} and returns representing \emph{desires} \citep{von1944theory,savage1954foundations}. 
Despite these objects being of different kinds, they are both treated linearly by the expectation operator. 
This symmetry allows us to reparameterise the roles of probabilities and returns in \autoref{eq:general-target} while keeping the preferences unchanged --- a degree of freedom we call \emph{object duality}. 

To study this, let us first note that using $\mathbf{F}=\mathbb{E}$ as aggregator results in a bilinear pairing of the form
\begin{equation}\label{eq:opt_linear}
    \mathbb{E}_{H\sim\mathbf{P}_\pi}
    \Big[\mathbf{J}(H)\Big] = \sum_{h\in\mathcal{T}} 
    \mathbf{J}(h) \mathbf{P}_\pi(h)
    = 
    \sum_{i\in\mathcal I} u_i p_i
    = 
    \big\langle \bm{u}, \bm{p} \big\rangle,
\end{equation}
where $\mathcal{I}$ indexes the possible trajectories. 
Moreover, $\big\langle \bm{u}, \bm{p} \big\rangle = \big\langle \bm{p}, \bm{u} \big\rangle$, which leverages the natural duality between $\Delta(\mathcal T)$ and $\mathbb R^{\mathcal T}$. In particular, if $u_i\geq 0$ and $\kappa = \sum_i u_i \in (0,\infty)$, then maximising the expected return $\mathbf{J}$ under $\mathbf{P}_\pi$ is equivalent to maximising the return $\kappa\cdot\mathbf{P}_\pi$ under $\mathbf{J}/\kappa$.

A richer way to intermix probabilities and returns is by applying the following rotation:
\begin{equation}
    p_i\to p_i'=\frac{g(u_i) p_i}{\sum_i g(u_i) p_i},
    \qquad
    u_i\to u_i'= \frac{u_i}{g(u_i)},
\end{equation}
where it is assumed that $g(u_i)>0$ for all $u_i$.  
For an affine map $g(u)=\alpha + \beta u$, one finds that
\begin{equation}
    \big\langle \bm{u}', \bm{p}' \big\rangle
    =
    \frac{\big\langle \bm{u}, \bm{p} \big\rangle}
    {\alpha + \beta\big\langle \bm{u}, \bm{p} \big\rangle}
    . 
\end{equation}
As the mapping $x\to x/(\alpha+\beta x)$ is increasing for $\alpha>0$, this implies that an agent maximising $u_i$ under $p_i$ is indistinguishable from one maximising $u'_i$ under $p'_i$. 
For $\alpha=1$ this corresponds to the so-called Jeffrey--Bolker rotations \citep{Jeffrey1965,Bolker1966,complex2018}.

The symmetry between returns and probabilities can also be exploited by interpreting the return as arising from a distribution. This can be done by introducing the distribution $q_i := e^{u_i}/Z$ with $Z=\sum_j e^{u_j}$, which allows us to rewrite \autoref{eq:opt_linear} as \citep{ortega2016memory,Wentworth2021UtilityCompression}
\begin{equation}
\label{eq:MDL}
    \mathbb{E}_{H\sim\mathbf{P}_\pi}
    \Big[\mathbf{J}(H)\Big] 
    = 
    \sum_i p_i \log q_i + \log Z
    = 
    - \underbrace{ H(p;q)}_{\text{cross-entropy}}
    + 
    \underbrace{\log Z}_{\text{free energy}}.
\end{equation}
Thus, re-expressing the return as a log-probability turns a problem of expected return maximisation into a minimisation of description length \citep{grunwald2007minimum}. 
This serves as the foundation for active inference~\citep{Friston2015,DaCosta2020}, which replaces the Helmholtz free energy in \autoref{eq:MDL} with variational free energy to pursue approximate Bayesian inference (see \autoref{app:active_inference}).

\subsection{Assessment duality: return--vs--aggregation}
\label{subsec:return-aggregation-duality}

An agent's attitude towards uncertainty is reflected by the aggregation functional $\mathbf{F}$, which may operate non-linearly on the return. Here we study how such non-linear effects can be relocated from the aggregation into the returns --- a degree of freedom that we call \emph{assessment duality}. 

To investigate this, consider an increasing 
utility transformation \(u:\mathbb R\to\mathbb R\). Given a return functional \(\mathbf J(h)\),
one may either transform the return itself and aggregate linearly, 
$\mathbb E_{H\sim \mathbf P_\pi}\big[u\big(\mathbf J(H)\big)\big]$, 
or leave the return unchanged and replace expectation by the
certainty-equivalent aggregator
\begin{equation}
\label{eq:certainty-equivalent-aggregator}
\hat{\mathbf F}^u[Z]
:=
u^{-1}\Big(
\mathbb E\big[u(Z)\big]
\Big).
\end{equation}
For example, if $u$ is concave then Jensen's inequality gives $\hat{\mathbf{F}}^u[Z] \leq \mathbb{E}[Z]$, so $\hat{\mathbf{F}}^u$ penalises uncertainty. 
Since \(u^{-1}\) is also strictly increasing, the two objectives induce the same
policy ordering:
\begin{equation}
\label{eq:return-aggregation-equivalence}
\arg\max_{\pi}
\mathbb E_{h\sim \mathbf{P}_\pi}
\Big[
u\big(\mathbf J(h)\big)
\Big]
=
\arg\max_{\pi}
\hat{\mathbf F}^u_{h\sim \mathbf{P}_\pi}
\Big[
\mathbf J(h)
\Big].
\end{equation}
Regarding \condA, $\hat{\mathbf{F}}^u$ satisfies A1 but only satisfies A2 for the log-sum-exp family given by $u(x)=e^{\beta x}$ \citep{morales2023thermodynamics}. If $u(x)$ is such that A2 does not hold (e.g. for the geometric mean), assessment duality can be used to recover Bellman recursion~\citep{maclean2011kelly}.

Thus, the same risk attitude can be represented either by risk-sensitive aggregation functional \(\hat{\mathbf F}^u\), or by modifying the curvature of returns via functional composition \(u\circ \mathbf J\) and aggregating them in a risk-neutral manner. This idea is exploited by risk-sensitive control \citep{HowardMatheson1972,Whittle1981} and multiplicative dynamic programming \citep{Bellman1957,Puterman1994MDPs}. 

A second facet of the assessment duality turns the effects of non-linear aggregation into reward regularisation. 
To study this, note first that the Bellman optimality operator features two distinct aggregations: an expectation over next-state transitions (\autoref{eq:q-function}) and a maximisation over actions (\autoref{eq:bellman-operator}). 
These two operations can be unified by seeing them as particular cases of the log-sum-exp family given by $\hat{\mathbf{F}}^\beta:=\hat{\mathbf{F}}^u$ for $u(x)=e^{\beta x}$, which interpolates between expectation ($\beta \to 0^{+}$) and supremum ($\beta \to \infty$) \citep{ortega2015informationtheoreticboundedrationality}. 
Then, one can generalise \autoref{eq:classical-bellman} as 
\begin{equation}
\label{eq:two-param-bellman}
V^{*}_{\beta_a, \beta_s}(s) \;=\; \hat{\mathbf{F}}^{\beta_a}_{A\sim \pi_0(\cdot|s)}\Bigl[\,
r(s, A) + \gamma\, \hat{\mathbf{F}}^{\beta_s}_{S'\sim K(\cdot|s,A)}\bigl[\, V^{*}_{\beta_a, \beta_s}(S') \,\bigr]
\,\Bigr],
\qquad
\beta_a, \beta_s \in \mathbb{R} \cup \{\pm\infty\},
\end{equation}  
where $\pi_0$ is a reference policy. 
For different values of $(\beta_a,\beta_s)$ this formulation recovers several well-known formulations:
\begin{itemize}
    \item For $(+\infty,-\infty)$, it becomes the \textit{robust Bellman equation} 
\begin{equation}
\label{eq:robust_bellman_equation}
    V^{*}(s) = \max_{a\in\mathcal A} \Big\{r(s,a) + \gamma\, \min_{q \in \mathcal Q(s,a)} \mathbb{E}_{S'\sim q}[V^{*}(S')]\Big\},
    \qquad
    \mathcal Q(s,a)\subset \Delta(S),
\end{equation}
whose fixed-point corresponds to the best policy against an adversarial environment with value $V^*_\text{rob}(s) = \sup_\pi \inf_{\mathbf{Q}_\pi\in \mathcal Q} V^\pi_{\mathbf{Q}_\pi}(s)$ \citep{Iyengar2005,NilimElGhaoui2005}.\footnote{Formulating robust RL within our framework requires a more general notion of uncertainty (see App.~\ref{sec:beyond_probability}).} 
\item For $(+\infty, \beta_s)$, it becomes the \textit{risk-sensitive Bellman equation}
\begin{equation}
\label{eq:risk-sensitive}
V^{*}(s) \;=\; \max_{a\in\mathcal A} \, \Bigl\{\, r(s,a) + \tfrac{1}{\beta_s} \log \mathbb{E}_{S'\sim K(\cdot|s,a)}\big[ \exp\bigl(\beta_s\, V^{*}(S')\bigr) \big] \,\Bigr\},
\end{equation}
whose fixed point is $V^*_\text{rsc}(s) = \sup_\pi \beta_s^{-1}\log\mathbb{E}_\pi[\exp(\beta_s\mathbf{J}^{(t)})|S_t=s]$, being a non-linear average of the return~\citep{HowardMatheson1972,Whittle1981}.\footnote{For small $\beta_s$ the value becomes 
$V^*_\text{rsc}(s) \approx \mathbb{E}_\pi[\mathbf{J}|S_t=s] + \beta_s/2\cdot\text{Var}[\mathbf{J}|S_t=s]$, so that variance is rewarded or penalised depending on the sign of $\beta_s$.} 
\item For $(\beta_a,0)$, it becomes the \textit{soft Bellman equation}
\begin{equation}
\label{eq:soft-bellman-standard}
V^*(s)
=
\frac{1}{\beta_a}
\log
\sum_{a\in\mathcal A}
\pi_0(a\mid s)
\exp\Big(
\beta_a
\left[
r(s,a)
+
\mathbb E_{S'\sim K(\cdot\mid s,a)}
V^*(S')
\right]
\Big),
\end{equation}
whose fixed-point satisfies $V^*_\text{soft}(s) = \sup_\pi \big\{ \mathbb{E}_\pi[\mathbf{J}|S_t=s] - \beta_a^{-1} D_\text{KL}\big(\mathbf P_\pi\|\mathbf P_{\pi_0}\big) \big\}$ \citep{Ziebart2010,Haarnoja2017}. For $\pi_0$ uniform this reduces to the objective in control-as-inference and maximum-entropy RL~\citep{Todorov2007,Kappen2012,Levine2018}, being  closely related to active inference~\citep{Millidge2020AIFvsCAI,dacosta2023reward}.
\end{itemize}

Thanks to assessment duality, these three formulations can also be obtained by regularising the return as
$\mathbf{J}(h)\to\mathbf{J}(h) + \text{(Reg)}$~\citep{petersen2000minimax,osogami2012robustness,husain2021regularized,derman2021twice,brekelmans2022your}. For the robust and risk-sensitive settings one can find that
\begin{equation}
    V^*(s) = \max_{a\in\mathcal{A}} \ext_{q\in\Delta(S)} \mathbb{E}_{S'\sim q}\,\Big[\, \hat{\ell}(s,a,S';q) + \gamma V^*(S')\,\Big]\
\end{equation}
where $\ext=\inf$ for robust and risk-averse cases and $\ext=\sup$ for risk-seeking, with
\begin{subequations}
\begin{align}
    \hat{\ell}_\text{rob}(s,a,s';q) 
   &= 
   r(s,a) + I_{\mathcal Q(s,a)}(q),
   && \text{(robust Bellman)} \label{eq:robust}
   \\ 
   \hat{\ell}_\text{risk}(s,a,s';q) 
   &= 
   r(s,a) + \frac{1}{\beta_s}\log\frac{K(s'|s,a)}{q(s')},
   && \text{(risk-sensitive Bellman)} \label{eq:risk_sensitive}
\end{align}
\end{subequations}
where $I_{\mathcal Q(s,a)}(q)=\infty$ if $q\notin \mathcal Q(s,a)$ and zero otherwise. The soft setting becomes
\begin{align}
    V^*(s) 
    &= \sup_{\mu\in\Delta(\mathcal A)} \mathbb{E}_{A\sim \mu, S'\sim K(\cdot|s,A)}\,\Big[\, \hat{\ell}(s,A;\pi) + \gamma V^*(S')\,\Big] 
    \quad\text{with}\\
   \hat{\ell}_\text{soft}(s,a;\pi) 
   &= 
   r(s,a) - \frac{1}{\beta_a}\log\frac{\mu(a|s)}{\pi_0(a|s)}.
   \label{eq:soft}
\end{align}
Proofs are provided in \autoref{app:first_duality_misc}. 

\subsection{Representation duality: dynamics--vs--aggregation}
\label{sec:pf-duality}

While the previous section considered aggregation functionals that are non-linear over the return, here we consider aggregations that may be non-linear over distributions. In particular, we study how such non-linearities can be relocated into a deformation of the probabilities --- a degree of freedom we call \emph{representation duality}. 

Concretely, we are looking for a deformation of the trajectory law $\mathbf{P}_\pi \mapsto \mathbf{P}_\pi'$ such that
\begin{equation}
\mathbf{F}_{H\sim \mathbf{P}_\pi}\big[ \mathbf{J}(H)\big]
\;=\;
\mathbb{E}_{H\sim \mathbf{P}_\pi'}\big[ \mathbf{J}(H)\big],
\label{eq:deformation-question}
\end{equation}
thus turning a non-linear aggregator into ordinary expectation. 
It can be shown that such a deformation exists for aggregators of the form (see \autoref{app:distortion-deformation})
\begin{equation}
\label{eq:choquet_aggregator}
    \tilde{\mathbf{F}}^g\Big[\,\mathbf{J}(H)\,;\,H\sim\mathbf{P}_\pi\Big]
    :=
    \int_0^\infty g\big(S(x)\big)\text{d}x 
    - 
    \int_{-\infty}^0\Big[1-g\big(S(x)\big)\Big]\text{d}x,
\end{equation}
where $S(x) = \mathbf{P}_\pi\big[\mathbf{J}(H)>x\big]$ and 
$g:[0,1]\to[0,1]$ is an increasing function with $g(0)=0$ and 
$g(1)=1$~\citep{follmer2025stochastic}. 
Representation duality is closely related to three classic decision-theoretic constructions: 
Yaari's dual theory of choice \citep{yaari1987}, Choquet expected utility
\citep{schmeidler1989subjective}, and max-min expected utility
\citep{gilboa1989maxmin}. 

For example, consider the conditional value at risk aggregator 
$\text{CVaR}_\alpha[\mathbf{J}(H)] = \tilde{\mathbf{F}}^g[\mathbf{J}(H);H\sim\mathbf{P}_\pi]$ for 
$g(s) = \alpha^{-1} \max\{ s- (1-\alpha),0\}$ \citep{rockafellar2000optimization}. 
It can be shown that
\begin{equation}
\mathrm{CVaR}_\alpha[\mathbf{J}(H)]
\;=\;
\mathbb{E}_{H\sim \mathbf{P}'_\pi}\big[ \mathbf{J}(H)\big],
\qquad
\frac{\text{d}\mathbf{P}'_\pi}{\text{d}\mathbf{P}_\pi}(h)
\;=\;
\frac{1}{\alpha}\,\mathbf{1}\{\,\mathbf{J}(h)\le q_\alpha(\mathbf{J})\,\}.
\label{eq:cvar-deformation}
\end{equation}
where $q_\alpha(\mathbf{J})$ is the $\alpha$-quantile of $\mathbf{J}$ under $\mathbf{P}_\pi$. 
Thus, the non-standard aggregator has been absorbed into a reweighting of $\mathbf P_\pi$ that concentrates mass on the worst $\alpha$-fraction of outcomes. 

With respect to \condA, $\tilde{\mathbf{F}}^g$ always satisfies A2 but generally does not satisfy A1. Luckily, representation duality can recover Bellman recursion for a deformed distribution $\mathbf{P}'_\pi$. 
The transformation $\mathbf{P}_\pi\to\mathbf{P}'_\pi$ may depend on $\mathbf{J}$ --- for instance, the reweighting $\text{d}\mathbf{P}'/\text{d}\mathbf{P}_\pi$ in \autoref{eq:cvar-deformation} is a function of the rank ordering of $\mathbf{J}$. 
Thus, this is not a fixed deformation that could be applied without knowledge of $\mathbf{J}$. 

\section{Discussion}

We have shown that Bellman equations and the optimality principle are consequences of the structural compatibility of dynamics, return, and aggregation, which can be stated as three fundamental conditions. 
Many reinforcement learning frameworks can be understood as relaxing one of these conditions, and restoring it by augmenting the state or by deforming returns or probabilities. 
Our framework provides insight into how such strategies work and the limits of their applicability.

These three conditions were found to give rise to three dualities, which synthesise and extend a substantial body of prior work across reinforcement learning, optimal control, statistics, theoretical neuroscience, and category theory. 
\begin{itemize}
    \item Object duality explores symmetries induced by the expectation operator, which treats both returns and probabilities linearly. This enables exchangeability between them, underlying the utility–probability conjugacy of \cite{Jeffrey1965} and \cite{Bolker1966} and the usage of priors instead of returns in active inference \citep{Friston2015}.  
    \item Assessment duality explores how non-linearities in how returns are aggregated can be turned into either return curvature or reward regularisation, being the foundation behind the regularisation observed in control-as-inference~\citep{Levine2018}, maximum-entropy  \citep{Ziebart2010} and robust reinforcement learning \citep{husain2021regularized}.
    \item Representation duality explores how non-linearities in how probabilities are aggregated can be turned into deformations of the distribution, being closely related to Choquet~\citep{schmeidler1989subjective} and max-min expected utility~\citep{gilboa1989maxmin}.
\end{itemize}
The mathematical foundation is the categorical and coalgebraic treatment of dynamical systems \citep{FeysHansenMoss2018,HedgesRodriguezSakamoto2024RL,Fritz2020Synthetic}, which already formalises some of these dualities in isolation. For details regarding related work, see \autoref{sec:related-work}.

Thus, while each of these dualities was previously known, our contributions are: 
(i) the formulation of a general objective and conditions on dynamics, return, and aggregation such that the Bellman equation is the result of a structural compatibility, 
(ii) the treatment of state augmentation and return or dynamics deformation as a recourse when a condition fails; (iii) a synthesis identifying the three dualities as pairwise symmetries of the transition, return, and aggregation functionals, each holding one of the conditions fixed; and 
(iv) the derivation of existing formulations in various subfields as special cases of these dualities. 

\subsubsection*{Acknowledgments}
\label{sec:ack}

The authors thank Cameron Allen, Artemy Kolchinsky, Alexander Gietelink Oldenziel, and Pedro Ortega for insightful discussions and useful feedback.

\bibliography{main}
\bibliographystyle{rlj}

\beginSupplementaryMaterials

\appendix

\section{Proofs}
\label{sec:appendix1}

For notational convenience, the proofs will denote
the combined action-transition kernel as
\begin{equation}
\label{eq:def_Ktilde}
\widetilde{K}(a, s \mid z) \;:=\; \bar{\pi}(a \mid z)\;
K(s \mid z^D, a),
\qquad \widetilde{K} : \mathcal{Z} \to \Delta(\mathcal{A} \times \mathcal{S}).
\end{equation}

\subsection{Proof of \autoref{thm:generalised-bellman}}
\label{eq:theorem1}

\generalisedbellman*

\begin{proof}
To start, let us note that \condR's one-step recursion (\autoref{eq:cond-R-recursion}) implies that 
\begin{equation}
\label{eq:comp_J}
    \mathbf{J}^{(t)}(h) \;=\;
\ell\bigl(z^R_t, a_t, s_{t+1}\bigr) \;\oplus\; \mathbf{J}^{(t+1)}(h).
\end{equation}
Additionally, by \condD and
$\pi \in \Pi_\phi$, the conditional trajectory law admits the
two-stage factorisation
\begin{equation}
\label{eq:k_tilde}
\mathbf{P}_\pi(\cdot \mid h_{t}) \;=\;
\widetilde{K}(\cdot \mid z_t) \times \mathbf{P}_\pi(\cdot \mid h_{t+1})
\end{equation}
with $z_t=\phi(h_t)$, where the first stage samples $(a_t, s_{t+1})$ from $\widetilde{K}(\cdot \mid z_t^D)$
and the second samples the remaining suffix from
$\mathbf{P}_\pi(\cdot \mid h_{t+1})$ with $h_{t+1} = (h_{t},a_t, s_{t+1})$.
By \autoref{lem:value-factorisation} applied at time $t+1$,
$\mathbf{P}_\pi(\cdot \mid h_{t+1})$ depends on $h_{t+1}$ only through
$\phi(h_{t+1}) = \nu(z_t, a_t, s_{t+1})$, so the second factor is a
genuine kernel on $(\mathcal{A} \times \mathcal{S}) \to \Delta(\mathcal{T})$.

With all these pieces in place, we can prove the result as follows:\begin{align*}
&\bar V_\pi(z_t)
= \mathbf{F}_{H \sim \mathbf{P}_\pi(\cdot \mid h_{t})}\!\bigl[\,\mathbf{J}^{(t)}(H)\,\bigr]
&& \text{(def.\ + L\ref{lem:value-factorisation})}\\
&= 
\mathbf{F}_{H \sim \mathbf{P}_\pi(\cdot \mid h_{t})}\!\bigl[\,\ell(z^R_t, A_t, S_{t+1}) \oplus \mathbf{J}^{(t+1)}(H)\,\bigr]
&& \text{(\autoref{eq:comp_J})}\\
&= 
\mathbf{F}_{A_t,S_{t+1} \sim \widetilde{K}(\cdot \mid z_t)} \circ 
\mathbf{F}_{H \sim \mathbf{P}_\pi(\cdot \mid h_{t+1})}\!\bigl[\,\ell(z^R_t, A_t, S_{t+1}) \oplus \mathbf{J}^{(t+1)}(H)\,\bigr]
&& \text{(A1 + \autoref{eq:k_tilde})}\\
&= 
\mathbf{F}_{A_t,S_{t+1} \sim \widetilde{K}(\cdot \mid z_t)}\!\Bigl[\, \ell(z^R_t, A_t, S_{t+1}) \,\odot\, \mathbf{F}_{H \sim \mathbf{P}_\pi(\cdot \mid h_{t+1})}\!\bigl[\,\mathbf{J}^{(t+1)}(H)\,\bigr]\,\Bigr]
&& \text{(A2)}\\
&= 
\mathbf{F}_{A_t,S_{t+1} \sim \widetilde{K}(\cdot \mid z_t)}\!\Bigl[\, \ell(z^R_t, A_t, S_{t+1}) \,\odot\, \bar V_\pi\bigl(\nu(z_t, A_t, S_{t+1})\bigr)\,\Bigr]
&& \text{(def.\ + L\ref{lem:value-factorisation})}\\
&= 
\mathbf{F}_{A_t~\sim\bar{\pi}(\cdot|z_t)}
\circ
\mathbf{F}_{S_{t+1} \sim K(\cdot \mid z^D_t,A_t)}\!\Bigl[\, \ell(z^R_t, A_t, S_{t+1}) \,\odot\, \bar V_\pi\bigl(\nu(z_t, A_t, S_{t+1})\bigr)\,\Bigr]
&& \text{(A1 + \autoref{eq:def_Ktilde})}.
\end{align*}
Renaming the dummy variables $z \leftrightarrow z_t$ gives
\autoref{eq:generalised-bellman}.
\end{proof}

\subsection{Proof of \autoref{thm:bellman-optimality}}
\label{eq:theorem2}

\begin{figure}
    \centering
    \begin{tikzpicture}[
    >=Stealth,
    every node/.style={font=\small},
    condbox/.style={
        rounded corners=4pt,
        draw=#1, line width=1pt,
        fill=#1!8,
        text width=2.95cm,
        align=center,
        inner sep=3pt
    },
    statebox/.style={
        rounded corners=5pt,
        draw=black!72, line width=1.3pt,
        fill=bellmanfill,
        align=center,
        inner sep=5pt
    }
]

\node[condbox=condD] (condD) at (0, 1.5) {
    \textbf{Condition D}\\[1pt]
    {\scriptsize History sufficiency}\\[2pt]
};
\node[condbox=condR] (condR) at (0, 0) {
    \textbf{Condition R}\\[1pt]
    {\scriptsize Return compositionality}\\[2pt]
};
\node[condbox=condF] (condA) at (0, -1.5) {
    \textbf{Condition A}\\[1pt]
    {\scriptsize Aggregation compatibility}\\[2pt]
};

\node[statebox] (eval) at (5.5, 0) {
    \textbf{Bellman recursion}\\[3pt]
    {\footnotesize $\bar V_\pi(z)=
    (\mathbf{T}_\pi\bar V_\pi)(z)$}
\\[3pt]{\footnotesize\itshape \autoref{thm:generalised-bellman}}
};

\draw[->, condD, line width=1.2pt, shorten >=5pt, shorten <=2pt] (condD.east) -- (eval.west);
\draw[->, condR, line width=1.2pt, shorten >=5pt, shorten <=2pt] (condR.east) -- (eval.west);
\draw[->, condF, line width=1.2pt, shorten >=5pt, shorten <=2pt] (condA.east) -- (eval.west);

\node[statebox] (opt) at (11.0, 0) {
    \textbf{Bellman optimality}\\[3pt]
    {\footnotesize $\bar V^{*}(z)=(\mathbf{T}_*\bar V^*)(z)$}
    \\[3pt]{\footnotesize\itshape \autoref{thm:bellman-optimality}}
};

\draw[->, line width=1.4pt, black!78]
    (eval.east) -- node[below=2pt, font=\scriptsize, text=black!62]
    {} (opt.west);

\node[condbox=condM, text width=2.5cm, fill=condM!10] (condM) at (8.2, 1.5) {
    {$\mathbf F,\ \odot$ monotonic}
};
\draw[->, condM, line width=1.1pt, dashed, shorten >=2pt] (condM.south) -- (8.2, 0.05);

\end{tikzpicture}
    \caption{\small{Conditions giving rise to generalised Bellman recursion and optimality. The three conditions D, R, A yield a \emph{generalised Bellman recursion} (\autoref{thm:generalised-bellman}). Adding an additional monotonicity assumption on $\mathbf{F}$ and $\odot$ yields the Bellman \emph{optimality} equation (\autoref{thm:bellman-optimality}).
    }} \label{fig:conditions}
\end{figure}

\bellmanoptimality*

\begin{proof}
By \autoref{lem:opt-factorisation}, $V^*(h_{t})=\bar V^*\big(\phi(h_{t})\big)$. We prove the
history-level optimality identity
\begin{equation}\label{eq:history-optimality}
  V^*(h_{t})
  = \sup_{\mu\in\Delta(A)}
    F_{A\sim\mu,S\sim K(\cdot|\phi_D(h_{t}),A)}
    \Big[\,\ell\big(\phi_R(h_{t}),A,S\big)\ 
    \odot\ V^*\big((h_{t},A,S)\big)\,\Big],
\end{equation}
which equals $(\mathbf{T}_*\bar V^*)(z)$ after factorisation.
For notational convenience, the proof will use
\begin{equation}
\widetilde{K}_\pi(a,s|h_{t}) := \pi(a\mid h_{t}) K(s\mid \phi_D(h_{t}),a).
\end{equation}

\medskip
\emph{Upper bound.} Let $\pi\in\Pi$ and write $\mu_\pi:=\pi(\cdot\mid h_{t})$.
Since $V_\pi\le V^*$ pointwise, monotonicity of $\odot$ and of $\mathbf{F}$ applied to \autoref{prop:history-recursion} give
\[
  V_\pi(h_{t})
  = F_{\widetilde{K}_{\mu_\pi}}\!\big[\ell\odot V_\pi(\cdot)\big]
  \le F_{\widetilde{K}_{\mu_\pi}}\!\big[\ell\odot V^*(\cdot)\big]
  \le \sup_{\mu\in\Delta(A)} F_{\widetilde{K}_{\mu}}\!\big[\ell\odot V^*(\cdot)\big].
\]
Taking the supremum over $\pi\in\Pi$ on the left yields
$V^*(h_{t})\le \text{RHS of \autoref{eq:history-optimality}}$.

\medskip
\emph{Lower bound ($\ge$).} Fix $\mu\in\Delta(A)$. For each successor $(h_{t},a,s)$ choose an
attaining policy $\pi_{a,s}\in\Pi$ with $V_{\pi_{a,s}}\big((h_{t},a,s)\big)=V^*\big((h_{t},a,s)\big)$.
Define $\pi^\mu\in\Pi$ to play $\mu$ at $h_{t}$ and, after observing $(a,s)$, to follow
$\pi_{a,s}$. Because $\Pi$ is the full class of history-dependent policies, it is closed under
this one-step concatenation, so $\pi^\mu\in\Pi$. Applying
\autoref{prop:history-recursion} to $\pi^\mu$ gives\[
  V_{\pi^\mu}(h_{t})
  = F_{\widetilde{K}_\mu}\!\big[\ell\odot V_{\pi^\mu}(\cdot)\big]
  = F_{\widetilde{K}_\mu}\!\big[\ell\odot V^*(\cdot)\big],
\]
the second equality holding because $V_{\pi^\mu}\big((h_{t},a,s)\big)=V_{\pi_{a,s}}\big((h_{t},a,s)\big)=V^*\big((h_{t},a,s)\big)$.
Since $\pi^\mu\in\Pi$ we have $V_{\pi^\mu}(h_{t})\le V^*(h_{t})$, hence
$F_{\widetilde{K}_\mu}[\ell\odot V^*(\cdot)]\le V^*(h_{t})$. Taking the supremum over $\mu\in\Delta(A)$
gives $V^*(h_{t})\geq \text{RHS}$ of \autoref{eq:history-optimality}.

\medskip
The two bounds establish \autoref{eq:history-optimality}, and factorising through $\phi$, gives
$\bar V^*=\mathbf{T}_*\bar V^*$.
\end{proof}

\subsection{Other results}
\label{sec:appendix_proof_lemma}

Here we state and prove auxiliary results that are used to prove the main theorems.

\begin{lemma}[Value factorisation]
\label{lem:value-factorisation}
If \condD and \condR hold, then
\begin{equation}
\label{eq:value-factorisation}
\phi(h_{t}) = \phi(h'_{t}) \;\Longrightarrow\;
V_\pi(h_{t}) = V_\pi(h'_{t}),
\quad 
\forall \pi \in \Pi_\phi, t \ge 0.
\end{equation}
\end{lemma}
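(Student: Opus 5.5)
The plan is to show something stronger than equality of values: the whole conditional law of the future, together with the tail return, depends on $h_t$ only through $z_t=\phi(h_t)$. Concretely, for $\pi\in\Pi_\phi$ and suffixes $\omega=(a_t,s_{t+1},a_{t+1},s_{t+2},\dots)$, I will prove that the pushforward of $\mathbf{P}_\pi(\cdot\mid h_t)$ onto suffixes coincides with that of $\mathbf{P}_\pi(\cdot\mid h'_t)$ whenever $\phi(h_t)=\phi(h'_t)$. I will also prove that $\mathbf{J}^{(t)}$, viewed as a function of the suffix, is the same function of $\omega$ for both histories. Then $V_\pi(h_t)$ and $V_\pi(h'_t)$ are the same functional $\mathbf{F}$ applied to the same distribution on $\mathcal{U}$, so they are equal. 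The only property of $\mathbf{F}$ used here is that it is defined on $\Delta(\mathcal{U})$.

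The first step is an induction on the prefix length $k$ of the suffix. I claim that for every $k$ and every finite suffix $(a_t,s_{t+1},\dots,a_{t+k-1},s_{t+k})$, the running statistics agree along the two extended histories. That is, $\phi(h_t,a_t,\dots,s_{t+j})=\phi(h'_t,a_t,\dots,s_{t+j})$ for $j\le k$. This follows directly from unifilarity, \autoref{eq:phi-forward-closure}: both sides equal the $j$-fold iterate of $\nu$ started from the common $z_t$.

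The second step uses this to match the finite-dimensional marginals. By \autoref{eq:cond_dist} iterated $k$ times, the probability of the length-$k$ suffix given $h_t$ is $\prod_{j=0}^{k-1}\pi(a_{t+j}\mid h_{t+j})\,K(s_{t+j+1}\mid\phi_D(h_{t+j}),a_{t+j})$. Since $\pi\in\Pi_\phi$, each factor $\pi(\cdot\mid h_{t+j})=\bar\pi(\cdot\mid z_{t+j})$, and each kernel factor depends only on $z^D_{t+j}$. By the first step, all these quantities coincide for $h_t$ and $h'_t$. The cylinder probabilities therefore agree. Kolmogorov extension (or the $\pi$–$\lambda$ theorem on cylinder sets) then gives equality of the suffix laws on $(\mathcal{A}\times\mathcal{S})^{\mathbb{N}}$. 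In the countable or finite setting implicit in \autoref{eq:opt_linear}, this is immediate.

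The third step handles the return. The tail return is $\mathbf{J}^{(t)}=\bigoplus_{\tau\ge t}\ell(\phi_R(h_\tau),a_\tau,s_{\tau+1})$. By the first step, $\phi_R(h_\tau)$ is the same function of $(z_t,\omega)$ for both histories, so every summand agrees, and hence the (possibly infinite) $\oplus$-composition agrees. Here I rely on whatever convergence convention defines $\bigoplus$ in \condR; the paper does not specify one, so I would note that the infinite composition is a function of its sequence of terms alone.

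The main obstacle is measure-theoretic bookkeeping rather than anything conceptual. One issue is making the extension from cylinders to full trajectories rigorous for general $\mathcal{S},\mathcal{A}$. Another is checking that $\mathbf{J}^{(t)}$ is measurable, so that its pushforward is a legitimate element of $\Delta(\mathcal{U})$. I would state the result under the standing assumption that $\mathbf{F}$ depends only on the law of $\mathbf{J}^{(t)}(H)$, and invoke the extension theorem for the cylinder argument. Applying the lemma at time $t+1$ also gives the kernel property used in the proof of \autoref{thm:generalised-bellman}: the law $\mathbf{P}_\pi(\cdot\mid h_{t+1})$ depends on $h_{t+1}$ only through $\nu(z_t,a_t,s_{t+1})$.
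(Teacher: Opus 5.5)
Your proposal is correct and follows essentially the same route as the paper. Unifilarity makes every $\phi(h_\tau)$, $\tau\ge t$, a function of $\phi(h_t)$ and the suffix. Hence both the product law built from $\bar\pi$ and $K$ and the tail return $\mathbf{J}^{(t)}$, viewed as a function of the suffix, depend on $h_t$ only through $\phi(h_t)$, and $\mathbf{F}$ sees the same law on $\mathcal{U}$. Your explicit induction and cylinder-set argument make rigorous what the paper asserts by writing the infinite product directly; there it is the $\pi$--$\lambda$ uniqueness theorem, not Kolmogorov's existence result, that does the work.
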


\begin{proof}
By \autoref{def:history-value}, $V_\pi(h_{t})$ is determined by the joint distribution of $\mathbf{J}^{(t)}(H)$ when $H \sim \mathbf{P}_\pi(\cdot \mid h_{t})$. It therefore suffices to show that, as $h_{t}$ varies over histories with a fixed value of $\phi(h_{t})$, both the conditional law $\mathbf{P}_\pi(\cdot \mid h_{t})$ (Step 1) and the function $H \mapsto \mathbf{J}^{(t)}(H)$ (Step 2) are unchanged.

\emph{Step 1.} By \condD and $\pi \in \Pi_\phi$, the
conditional trajectory distribution factors as
\begin{equation}
\label{eq:cond-law-factorisation}
\mathbf{P}_\pi(h \mid h_{t}) \;=\; \prod_{\tau \ge t}
\bar{\pi}\bigl(a_{\tau} \mid \phi(h_{\tau})\bigr)\;
K\bigl(s_{\tau+1} \mid \phi_D(h_{\tau}),\, a_{\tau}\bigr).
\end{equation}
Due to the unifilarity of $\phi=(\phi_D,\phi_R)$, every $\phi(h_{\tau})$ with $\tau \ge t$ is determined by $\phi(h_{t})$
and the intermediate pairs $(a_t, s_{t+1}), \dots, (a_{\tau-1}, s_{\tau})$.
Hence, \autoref{eq:cond-law-factorisation} depends on $h_{t}$
only through $\phi(h_{t})$.

\emph{Step 2.} By \condR,
\begin{equation*}
\mathbf{J}^{(t)}(h) \;=\; \bigoplus_{\tau \ge t}
\ell\bigl(\phi_R(h_{\tau}),\, a_{\tau},\, s_{\tau+1}\bigr).
\end{equation*}
By the unifilarity of $\phi_R$, 
each $\phi_R(h_{\tau})$ for $\tau \ge t$ is determined by
$\phi_R(h_{t})$ and the same intermediate pairs. Hence the local
returns $\ell(\phi_R(h_{\tau}), a_{\tau}, s_{\tau+1})$ depend on
$h_{t}$ only through $\phi_R(h_{t})$.

Finally, the functional $V_\pi(h_{t})$ is determined by the distribution of $\mathbf{J}^{(t)}(h)$ under $H \sim \mathbf{P}_\pi(\cdot \mid h_{t})$.
Steps 1 and 2 show that both the law and the random variable depend on
$h_{t}$ only through $\phi(h_{t})$, so the distribution of
$\mathbf{J}^{(t)}(H)$ does, and hence so does $V_\pi(h_{t})$.
\end{proof}

\begin{lemma}[Factorisation of the optimal value]\label{lem:opt-factorisation}
Assume \condD and \condR. Define $V^*(h_{t}) := \sup_{\pi\in\Pi} V_\pi(h_{t})$.
Then for any two histories with $\phi(h_{t})=\phi(h'_{t})$,
\[
  V^*(h_{t}) = V^*(h'_{t}),
\]
so $V^*$ factors through $\phi$ as $V^*(h_{t}) = \bar V^*\big(\phi(h_{t})\big)$ for a unique
$\bar V^*:\mathcal Z\to\mathbb{R}$.
\end{lemma}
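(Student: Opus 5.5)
The plan is to use a transport (coupling) argument. Any policy started from $h_t$ can be copied, step for step, into a policy started from $h'_t$. Because $\phi(h_t)=\phi(h'_t)$, the copy will induce the same law of tail returns and hence the same value. Since the supremum defining $V^*$ ranges over the full class $\Pi$ of history-dependent policies, and not just $\Pi_\phi$, I cannot apply \autoref{lem:value-factorisation} directly. Instead I will show that $\{V_\pi(h_t):\pi\in\Pi\}\subseteq\{V_{\pi'}(h'_t):\pi'\in\Pi\}$ and conclude by symmetry.

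\emph{Step 1 (transplant).} Fix $\pi\in\Pi$ and write any continuation of $h'_t$ as $h'_\tau=(h'_t,a_t,s_{t+1},\dots,s_\tau)$. Define $\pi'$ by
\[
\pi'(\cdot\mid h'_\tau):=\pi\bigl(\cdot\mid (h_t,a_t,s_{t+1},\dots,s_\tau)\bigr)
\]
on extensions of $h'_t$, and arbitrarily elsewhere. This is a legitimate element of $\Pi$, because $\Pi$ places no restriction on history dependence. The construction is well defined only if $h_t$ and $h'_t$ have the same length; this is implicit in the statement, since both carry the index $t$.

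\emph{Step 2 (equal suffix laws).} By \autoref{eq:cond_dist} iterated, the law of the suffix $(a_t,s_{t+1},a_{t+1},\dots)$ under $\mathbf{P}_\pi(\cdot\mid h_t)$ is a product of factors $\pi(a_\tau\mid h_\tau)\,K(s_{\tau+1}\mid\phi_D(h_\tau),a_\tau)$. By unifilarity (\autoref{eq:phi-forward-closure}), $\phi_D(h_\tau)$ is a function of $\phi(h_t)$ and the intermediate pairs only, so it coincides with $\phi_D(h'_\tau)$ along corresponding suffixes. Together with the definition of $\pi'$, this shows that the two suffix laws agree factor by factor. I will argue this by induction on finite-length suffixes, which determines the law on $\mathcal{T}$. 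This is the step needing the most care, though it is essentially Step 1 of the proof of \autoref{lem:value-factorisation} with the policy factor handled by the transplant instead of by $\Pi_\phi$.

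\emph{Step 3 (equal returns and values).} As in Step 2 of \autoref{lem:value-factorisation}, $\phi_R(h_\tau)=\phi_R(h'_\tau)$ along corresponding suffixes, so by \condR the tail return $\mathbf{J}^{(t)}$ is the same function of the suffix for both histories. Hence $\mathbf{J}^{(t)}(H)$ has the same distribution under the two conditional laws, and since $\mathbf{F}$ depends only on this distribution, $V_\pi(h_t)=V_{\pi'}(h'_t)\le V^*(h'_t)$. Taking the supremum over $\pi$ gives $V^*(h_t)\le V^*(h'_t)$, and swapping the roles of $h_t$ and $h'_t$ gives equality. Thus $V^*$ is constant on fibres of $\phi$, and $\bar V^*$ is well defined on the image $\phi(\mathcal{H})$, uniquely there and fixed arbitrarily off the image. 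If the supremum may be infinite, the same argument goes through with values in the extended reals.
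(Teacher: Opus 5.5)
Your proposal is correct and takes essentially the same approach as the paper. The paper likewise matches $\phi$ along appended suffixes by induction via $\nu$, transplants $\pi$ to a policy $\Theta(\pi)$ starting from $h'_t$, and checks that the suffix law and the tail return coincide. The differences are minor: you conclude by one-sided inclusion plus symmetry where the paper invokes bijectivity of $\Theta$, and your observations that $\bar V^*$ is unique only on $\phi(\mathcal{H})$ and may need extended-real values are accurate refinements of the paper's statement.
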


\begin{proof}
Take $h_{t},h'_{t}\in\mathcal{H}$ with $\phi(h_{t})=\phi(h'_{t})=z$. 
For a given suffix $u=(a_t,s_{t+1},a_{t+1},s_{t+2},\dots)$ and each $\tau\ge t$, write
\[
  h^{u}_{\tau} := (h_{t},a_t,s_{t+1},\dots,a_{\tau-1},s_\tau),
  \qquad
  (h')^{u}_{\tau} := (h'_{t},a_t,s_{t+1},\dots,a_{\tau-1},s_\tau)
\]
for the two histories obtained by appending the same suffix to $h_{t}$ and to $h'_{t}$. 

\medskip
\emph{Step 1.} We prove by induction that
\begin{equation}\label{eq:suffix-match}
  \phi\big(h^{u}_{\tau}\big) = \phi\big((h')^{u}_{\tau}\big)
  \qquad \text{for all } \tau\ge t .
\end{equation}
At $\tau=t$ (taking the empty suffix) the claim reduces to the
hypothesis $\phi(h_{t})=\phi(h'_{t})$. Now, suppose \autoref{eq:suffix-match} holds at $\tau$.
By the unifilarity of $\phi=(\phi_D,\phi_R)$ guaranteed by \condD and \condR, there is a map
$\nu:\mathcal Z\times \mathcal A\times \mathcal S\to \mathcal Z$ with $\phi(h_{\tau+1})=\nu\big(\phi(h_{\tau}),a_{\tau},s_{\tau+1}\big)$ for every $h_{\tau+1}$.
Applying this to both extended histories and using the inductive hypothesis, one finds that
\[
  \phi\big(h^{u}_{\tau+1}\big)
  = \nu\big(\phi(h^{u}_{\tau}),a_{\tau},s_{\tau+1}\big)
  = \nu\big(\phi((h')^{u}_{\tau}),a_{\tau},s_{\tau+1}\big)
  = \phi\big((h')^{u}_{\tau+1}\big).
\]
This implies that $\phi_D$ and $\phi_R$ agree along the two
suffix-extended histories at every $\tau\ge t$.

\medskip
\emph{Step 2.} The value $V_\pi(h_{t})$ depends on $\pi$ only through its action distributions on histories
extending $h_{t}$. Define a relabelling $\Theta$ of these restrictions by mimicking along
matched suffixes: for every finite suffix $u$,
\[
  \Theta(\pi)\big((h')^{u}_{\tau}\big) := \pi\big(h^{u}_{\tau}\big),
  \qquad \tau\ge t .
\]
We show $V_\pi(h_{t})=V_{\Theta(\pi)}(h'_{t})$ by checking that the trajectory law and the
tail return agree.

By \condD, the conditional trajectory law factorises along the suffix
through $\phi_D$:
\[
  P_\pi\big(u\mid h_{t}\big)
  = \prod_{\tau\ge t} \pi\big(a_\tau\mid h^{u}_{\tau}\big)\,
    K\big(s_{\tau+1}\mid \phi_D(h^{u}_{\tau}),a_\tau\big).
\]
By Step~1, $\phi_D(h^{u}_{\tau})=\phi_D((h')^{u}_{\tau})$ for every $\tau$, and by construction
$\Theta(\pi)\big((h')^{u}_{\tau}\big)=\pi\big(h^{u}_{\tau}\big)$, hence each factor is unchanged,
giving $P_{\Theta(\pi)}(\,\cdot\mid h'_{t})=P_\pi(\,\cdot\mid h_{t})$.

\medskip
\emph{Step 3.} By \condR, the tail return decomposes as
\[
  \mathbf J^{(t)}\big(h_{t},u\big)
  = \bigoplus_{\tau\ge t} \ell\big(\phi_R(h^{u}_{\tau}),a_\tau,s_{\tau+1}\big),
\]
and likewise from $h'_{t}$. By Step~1, $\phi_R(h^{u}_{\tau})=\phi_R((h')^{u}_{\tau})$ for every
$\tau$, so the two $\oplus$-sums have identical terms:
$\mathbf J^{(t)}\big(h_{t},u\big)=\mathbf J^{(t)}\big(h'_{t},u\big)$ for every suffix $u$. 

Since the law of the suffix and the tail-return functional both coincide, the random variable
$\mathbf J^{(t)}$ has the same distribution under $(\pi,h_{t})$ and under $(\Theta(\pi),h'_{t})$.
By \autoref{def:history-value}, $V_\pi(h_{t})=V_{\Theta(\pi)}(h'_{t})$.

\medskip
\emph{Step 4: equality of suprema.}
The map $\Theta$ is a bijection between the policy restrictions on histories extending $h_{t}$
and those extending $h'_{t}$, its inverse mimicking in the opposite direction. Hence the sets of
achievable values $\{V_\pi(h_{t}):\pi\in\Pi\}$ and $\{V_{\pi'}(h'_{t}):\pi'\in\Pi\}$ coincide,
and therefore so do their suprema:
\[
  V^*(h_{t}) = \sup_{\pi\in\Pi} V_\pi(h_{t})
             = \sup_{\pi'\in\Pi} V_{\pi'}(h'_{t})
             = V^*(h'_{t}).
\]
As $V^*$ takes a common value on every fibre $\{\,h_{t}: \phi(h_{t})=z\,\}$, it factors through
$\phi$, defining a unique $\bar V^*:Z\to\mathbb{R}$ with $V^*(h_{t})=\bar V^*(\phi(h_{t}))$.
\end{proof}

\begin{lemma}[History-level Bellman recursion]\label{prop:history-recursion}
Assume \condD, \condR, and \condA hold. For every $\pi\in\Pi$ and
every $h_{t}\in\mathcal{H}$,
\begin{equation}\label{eq:history-recursion}
  V_\pi(h_{t})
  \;=\;
  F_{A\sim \pi(\cdot\mid h_{t}),S\sim K(\cdot\mid \phi_D(h_{t}),A)}
  \Big[\,\ell\big(\phi_R(h_{t}),A,S\big)\ \odot\ V_\pi\big((h_{t},A,S)\big)\,\Big].
\end{equation}
\end{lemma}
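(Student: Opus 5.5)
The plan is to repeat the chain of equalities from the proof of \autoref{thm:generalised-bellman}, but at the level of histories rather than Bellman states. That chain used $\pi\in\Pi_\phi$ only to rewrite $V_\pi$ as $\bar V_\pi\circ\phi$. The lemma keeps $V_\pi((h_t,A,S))$ as a function of the successor history, so that step is never needed and arbitrary $\pi\in\Pi$ is allowed.

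\emph{Decomposing law and return.} Fix $h_t$ and $\pi\in\Pi$. From \autoref{eq:cond_dist} (a consequence of \condD alone, valid for any history-dependent $\pi$), the conditional law factorises in two stages:
\[
\mathbf{P}_\pi(\cdot\mid h_t)=\widetilde K_\pi(\cdot\mid h_t)\times \mathbf{P}_\pi(\cdot\mid h_{t+1}),
\qquad
\widetilde K_\pi(a,s\mid h_t):=\pi(a\mid h_t)\,K(s\mid\phi_D(h_t),a),
\]
with $h_{t+1}=(h_t,a,s)$. The second factor is indexed directly by $(a,s)$, since $h_t$ is fixed. It is therefore a genuine kernel $\mathcal A\times\mathcal S\to\Delta(\mathcal T)$, and no appeal to \autoref{lem:value-factorisation} is required. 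On the return side, \autoref{eq:cond-R-recursion} gives $\mathbf J^{(t)}(H)=\ell(\phi_R(h_t),A_t,S_{t+1})\oplus\mathbf J^{(t+1)}(H)$.

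\emph{Applying Condition A.} Substituting both decompositions into \autoref{def:history-value}, I apply (A1) with $K_1=\widetilde K_\pi(\cdot\mid h_t)$ and $K_2=\mathbf P_\pi(\cdot\mid h_t,\cdot,\cdot)$. This splits $\mathbf F$ into an outer aggregation over $(A,S)$ and an inner aggregation over the suffix. Inside, the pair $(A,S)$ is fixed, so $\ell(\phi_R(h_t),A,S)$ is a non-random element $u\in\mathcal U$. Then (A2) pulls it out as $u\odot \mathbf F_{H\sim\mathbf P_\pi(\cdot\mid h_t,A,S)}[\mathbf J^{(t+1)}(H)]$. By definition, the remaining inner term is exactly $V_\pi((h_t,A,S))$, which yields \autoref{eq:history-recursion}.

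\emph{Expected obstacle.} The delicate step is the use of (A1). The outer variable is the pair $(A,S)$, while the inner integrand depends jointly on that pair and on the suffix, so the composite kernel $K_2\circ K_1$ must produce the full trajectory law and not merely its suffix marginal. I would handle this by taking $K_2$ to output the triple $((A,S),\text{suffix})$, i.e.\ the whole trajectory $(h_t,A,S,\ldots)$, so the integrand is a function of $K_2$'s output. Then (A1) applies verbatim, with the implicit convention, as in the proof of \autoref{thm:generalised-bellman}, that $\mathbf F_K$ acts on functions of the sampled variable. The infinite-horizon tail return needs no extra care, since only the one-step identity \autoref{eq:cond-R-recursion} is used.
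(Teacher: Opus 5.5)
Your proposal is correct and is essentially the paper's proof. The paper also reruns the opening steps of the chain in the proof of \autoref{thm:generalised-bellman} (tail-return recursion, then (A1) on the two-stage factorisation of $\mathbf{P}_\pi(\cdot\mid h_t)$, then (A2) and the definition of $V_\pi$) with $\bar\pi(a\mid z)$ replaced by $\pi(a\mid h_t)$, noting that $\pi\in\Pi_\phi$ is needed only to pass to $\bar V_\pi$. Your two refinements are correct and make explicit what the paper leaves implicit: the continuation law is already a kernel in $(a,s)$ once $h_t$ is fixed, so \autoref{lem:value-factorisation} is not needed, and (A1) must be applied with $K_2$ outputting the whole trajectory so that the integrand is a function of the composite's output.
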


\begin{proof}
The argument is identical to Steps~1--3 of the proof of \autoref{thm:generalised-bellman} but with $\bar{\pi}(a\mid z)$ replaced by $\pi(a\mid h_{t})$. Those steps
only use Conditions D, R, and A. None of them require $\pi\in\Pi_\phi$ (that restriction
is invoked only afterwards, through \autoref{lem:value-factorisation}, to re-express
$V_\pi$ as a function of $z=\phi(h_{t})$). 
\end{proof}

\section{On the role of \condD}
\label{app:condition-d}

Here we show that \condD is
formally trivial if arbitrary history-valued statistics are allowed. 
However, this does not make its role vacuous as part of Bellman's construction. It serves four
purposes.

\begin{enumerate}
    
\item It captures the compression problem, i.e., whether a proposed state $z_t$ 
contains all past information relevant to accounting for future uncertainty. 
\item It separates dynamics-sufficiency (via $\phi_D$) from
return-sufficiency (via $\phi_R$). Even if the full history satisfies both, a useful
Bellman equation requires a common compressed state. 
\item It clarifies where non-stationarity is handled: time must either be irrelevant or be
included in the statistic. 
\item It allows us to generalise the framework from probability theory to other uncertainty formalisms (see App.~\ref{sec:beyond_probability}). 
\end{enumerate}

Thus, the core of \condD is not that some $\phi_D$ exists, but is that the chosen uncertainty representation admits a local, recursively updateable, and hopefully compressed statistic that is compatible with the return and aggregation structures used by the Bellman recursion.

\subsection{Clarifications in probabilistic settings}

\condD is stated as the existence of a unifilar statistic
$\phi_D:\mathcal H\to\mathcal Z_D$ satisfying
\begin{equation}
\label{eq:cond-D1_app}
\mathbf{P}_\pi(h_t) 
= 
\rho(s_0)
\prod_{\tau = 0}^{t-1}
\pi(a_\tau \,\big|\, h_{\tau})\;
K\bigl(s_{\tau+1} \,\big|\, \phi_D(h_{\tau}),a_\tau\bigr).
\end{equation}
Here, we consider some examples. 

\emph{History-dependent dynamics.} For an arbitrary process, the
trivial statistic $\phi_D(h_{t})=h_{t}$ with $\mathcal Z_D=\mathcal H$ satisfies \condD, but gives no
compression and no tractable Bellman equation in general. The
recursion is then merely a recursion on the tree of histories, since
$h_{t+1}=(h_{t},a_t,s_{t+1})$. 
In other words, every stochastic process is Markov when the state is taken to be its complete past. 

\emph{Markovian dynamics.} If \condD is satisfied with $\phi_D(h_{t})=s_{t}$, then the scenario corresponds to a Markov decision process. 
If the transition dynamics are time-inhomogeneous, 
then $\phi_D(h_{t})=s_{t}$ is generally not sufficient, but
$\phi_D(h_{t})=(t,s_{t})$ is. Thus, a time-inhomogeneous model can
still satisfy \condD after augmenting the state by adding a clock, so $\mathcal{Z}_D=\mathcal{S}\times\mathbb{N}$.

\emph{Partially-observable Markov decision processes (POMDPs).} In a POMDP, the state $s_t$ is not directly observed. Instead, observations are obtained from a probabilistic mapping $\mathcal{S} \to \Delta(\Omega)$, where $\Omega$ is a set of possible observations for the agent. However, these observations are not dynamically-sufficient in general. POMDPs with known dynamics can be converted into a \emph{belief-space MDP}, where each state is a probability distribution representing the agent's belief about the hidden state $s \in \mathcal{S}$. The belief state
$b_t\in\Delta(\mathcal S)$, updated by Bayes' rule from the previous
belief, the action, and the observation, is the canonical
dynamics-sufficient statistic. In this case, \condD is the statement
that the belief process is Markovian, even though the observation process may not be.

\subsection{Beyond probability}
\label{sec:beyond_probability}

The same distinction between trivial full-history representations and
a useful compressed representation can be generalised to more general notions of 
uncertainty. Let $\mathsf M(Y)$ denote the space of uncertainty objects over a space $Y$. An \emph{uncertainty kernel} is a map
\[
L:X\to \mathsf M(Y).
\]
When $\mathsf M=\Delta$, these are ordinary Markov kernels. More
generally, $\mathsf M$ may be a powerset functor for nondeterministic
systems, a semiring-valued distribution functor for weighted systems,
or a credal set for imprecise probabilities \citep{uncertainty2024}.

In such a setting, \condD should be understood as requiring local
unrolling of future uncertainty through the statistic $\phi_D$. That
is, for each policy $\pi$, there is a one-step uncertainty kernel
\[
\Lambda_\pi:\mathcal Z_D\to \mathsf M(\mathcal A\times\mathcal S)
\]
such that the uncertainty object over future suffixes from $h_{t}$
is obtained by first drawing, selecting, or weighting one local
action-state pair according to $\Lambda_\pi(\phi_D(h_{t}))$, and
then continuing from the updated history
$(h_{t},a_t,s_{t+1})$. Abstractly, if the uncertainty calculus has a
sequential composition operation, written here as
$\operatorname{bind}$, this takes the schematic form
\[
\mathbf{P}_\pi^t(\cdot\mid h_{t})
=
\Lambda_\pi(\phi_D(h_{t}))
\operatorname{bind}
\Bigl((a_t,s_{t+1})\mapsto
\mathbf{P}_\pi^{t+1}(\cdot\mid h_{t},a_t,s_{t+1})\Bigr),
\]
with the unifilar update
\[
\phi(h_{t+1})=
\nu\big(\phi(h_{t}),a_t,s_{t+1}\big).
\]
For ordinary probability, this is the usual one-step decomposition of
the conditional trajectory law. For other uncertainty calculi, it says
that future uncertainty can still be generated recursively from a
state-dependent local uncertainty object.

\paragraph{Nondeterministic uncertainty.}

Let $\mathsf M(X)=\mathsf P(X)$ be the powerset of possible outcomes.
An uncertainty kernel is then a set-valued map $L:X\to\mathsf P(Y)$.
If
\[
\bar S(z,a)\subseteq \mathcal S
\]
is the set of possible next states from statistic $z$ and action $a$,
then \condD says that the possible future suffixes from $h_{t}$ are
generated by choosing $s_{t+1}\in \bar S(\phi_D(h_{t}),a_t)$ and then
continuing from the updated statistic. The full history again gives a
trivial representation, and the useful question is whether the set of
possible futures depends only on a smaller state.

\paragraph{Imprecise and robust uncertainty.}

For robust or imprecise models, $\mathsf M(X)$ may be a set of
probability measures on $X$. A one-step model may specify an ambiguity
set
\[
\bar{\mathcal K}(z,a)\subseteq \Delta(\mathcal S)
\]
rather than a single transition kernel. A dynamics statistic is
sufficient when this ambiguity set, and the way it is updated through
time, depends on the past only through $\phi_D(h_{t})$. However,
Bellman recursions for such models generally require more than this:
the ambiguity sets must compose in a dynamically consistent way. In
robust MDPs this is known as the rectangularity requirement~\citep{Wiesemann2013}. In the
present framework, that additional requirement belongs to the analogue
of \condA: the aggregation $\mathbf F$ must respect the chosen
sequential composition of uncertainty objects.

\section{Active inference and the free-energy principle}
\label{app:active_inference}

This appendix develops the connection between \autoref{eq:MDL} and the active inference framework~\citep{Friston2015,DaCosta2020,Sajid2021}. 

Active inference casts the symmetry between beliefs and preferences as constitutive of agency: preferences are represented as prior preferences encoded in a generative model, and the agent's task is to minimise variational and expected free energy across perception and action, respectively. The framework has been developed extensively as a descriptive model of biological agency \citep{Friston2010}, with sustained engagement and critique \citep{biehl2021technical,Sajid2021,Millidge2024Retrospective}.

Additionally, there exist interesting relationships between active inference and control-as-inference.  \citet{Millidge2020AIFvsCAI} have shown that the two frameworks differ primarily in how reward enters the generative model --- i.e., priors over outcomes in active inference, auxiliary optimality variables in control-as-inference, and that the underlying inferential structures are related by the process of using inference procedures to compute policies. In the following, we explain how the distinction between active inference vs control-as-inference introduced by \cite{Millidge2020AIFvsCAI} corresponds, in our framework, to a choice of \emph{how} the object duality is implemented.\footnote{We view our framework as neutral between the active-inference and standard-RL philosophies. That said, the broader programmatic claims of active inference (that the free energy principle is a normative theory of biological agency, that all of perception and action reduces to variational inference) lie outside the scope of our framework, being claims about the empirical scope of the object duality, not about its formal content.}

\subsection{Object duality in active inference}
\label{app:vfe:setup}
For a finite outcome or trajectory space
and $\beta>0$, define
\begin{equation}
\tilde P_\beta(h)
:=
\frac{\exp[\beta\mathbf J(h)]}{Z_\beta},
\qquad
Z_\beta:=\sum_h\exp[\beta\mathbf J(h)].
\label{eq:aif-gibbs-preference}
\end{equation}
Then, for any predictive trajectory law $Q_\pi$,
\begin{equation}
-\mathbb E_{Q_\pi}\log\tilde P_\beta(H)
=
-\beta\,\mathbb E_{Q_\pi}[\mathbf J(H)]+\log Z_\beta.
\label{eq:aif-pragmatic}
\end{equation}
\autoref{eq:aif-pragmatic} is the formal expression of the
active-inference identification of preferences with prior probabilities: the
agent treats high-return trajectories as a priori more likely under a
generative model, whose observable marginal is the Gibbs measure $q$.
This is the `biased generative model' construction discussed
by~\cite{Friston2015} and analysed critically
by~\cite{millidge2021whence}.
Thus, prior preferences encode the original return up to a positive scale and a policy-independent constant. This is the part of active inference that is captured by object duality. Moreover, maximising the expected return is equivalent to minimising the
cross-entropy 
\begin{equation}
    H(Q_\pi; \tilde P_\beta) = -\mathbb{E}_{H\sim Q_\pi}\big[\log \tilde P_\beta(H)\big], 
\end{equation}
since $Z$ does not depend on $\pi$. 

\subsection{Variational free energy for perceptual inference}

Let us now expand the setting to consider a partially observable structure, which distinguishes the latent state $s \in \mathcal S$ from \textit{observations} $o \in \mathcal{O}$, where $\mathcal{O}$ is a set of possible observations.
We assume that the agent does not necessarily have access to the realisation of the latent states directly throughout, but only sees the observations which are sampled from a likelihood function $P(o \mid s)$. In active inference, the agent's actions only affect the state directly through a Markovian transition kernel $P(s \mid a)$, and only affects the agent's observations indirectly through the effect of their actions on the hidden state. This is a consequence of the \textit{Markov blanket} assumption, which defines active and sensory states ($a$ and $o$ respectively) as a statistical boundary between the agent and its environment. The agent is also assumed to possess \textit{internal states}, which allow it to perform perceptual and active inference. Perceptual inference is modelled as variational inference, where the agent maintains an approximate posterior $Q(s\mid o)$ that aims to approximate the true Bayesian posterior $P(s \mid o)$. The \textit{variational free energy} (VFE) is given by
\begin{equation}
 F(o;Q)
 :=
 \mathbb E_{Q(s\mid o)}
 \left[
   \log Q(S\mid o)-\log P(o,S)
 \right].
 \label{eq:aif-vfe}
\end{equation}
Factorising $P(o,s)$ into the product of a likelihood and marginal state distribution gives the standard
complexity--accuracy decomposition
\begin{equation}
 F(o;Q) = \underbrace{D_{\mathrm{KL}}\!\left(Q(S\mid o)\,\|\,P(S)\right)}_{\text{complexity}} - \underbrace{\mathbb E_{Q(S\mid o)}\log P(o\mid S)}_{\text{accuracy}}.
 \label{eq:aif-vfe-accuracy-complexity}
\end{equation}
Alternatively, using $P(o,s)=P(s\mid o)P(o)$ gives
\begin{equation}
 F(o;Q) = \underbrace{-\log P(o)}_{\text{surprisal}} + \underbrace{D_{\mathrm{KL}}\!\left(Q(S\mid o)\,\|\,P(S\mid o)\right)}_{\text{Divergence}}.
 \label{eq:aif-vfe-posterior-gap}
\end{equation}
Variational free energy therefore upper-bounds the surprise
$-\log P(o)$, with equality when the approximate posterior is exact. Finally, the VFE can be rearranged as the sum of an energy and entropy term, which can be related to the Helmholtz free energy:
\begin{equation}
F(o;Q) = -\underbrace{\mathbb{E}_{Q(S\mid o)}\log P(o,S)}_{\text{Energy}} - \underbrace{H[Q(S|o)]}_{\text{Entropy}}.
\end{equation}

\subsection{Expected free energy for policy selection}
\label{app:aif:efe}

Bringing preferences back into the picture, consider a preference model $\tilde{P}_\beta(O)$ over observations. The \textit{expected free energy} (EFE) of a given open-loop policy is defined as~\citep{parr2022active} 
\begin{align}
 G(\pi)
 =&
 -\underbrace{\mathbb E_{Q_\pi(O)}\big[\log \tilde P_\beta(O)\big]}
 _{\text{pragmatic value}} -
 \underbrace{\mathbb E_{Q_\pi(O)}
 D_{\mathrm{KL}}\!\big(
 Q_\pi(S\mid O)\,\|\,Q_\pi(S)
 \big)}_{\text{epistemic value}}.
 \label{eq:aif-efe-pragmatic-epistemic}
\end{align}
The pragmatic value, as we have seen in \autoref{eq:aif-pragmatic}, corresponds to the expected return, whereas the epistemic term favours policies expected to produce observations that are informative about the hidden state.

\subsection{Sophisticated inference and Bellman recursion.}
Standard finite-horizon active-inference schemes typically evaluate open-loop sequences of actions. Because future actions are then not conditioned on future outcomes or beliefs, this scheme need not satisfy Bellman's principle beyond a one-step horizon as shown in~\citep{dacosta2023reward}. Sophisticated inference instead evaluates expected free energy recursively over counterfactual future beliefs, implementing a tree search over belief states using an open-loop policy
\citep{friston2021sophisticated}. In the terminology of this paper, a recursively updated belief supplies a dynamics-sufficient statistic, while recursive evaluation of stagewise expected free energy supplies the time-consistent continuation structure needed for Bellman recursion.

For finite-horizon MDPs with known transitions and state preferences $\tilde P_\beta(s)\propto\exp\{\beta R(s)\}$, \cite{dacosta2023reward} show that, in the limit $\beta\to\infty$, the standard expected free energy is Bellman optimal at horizon one but not in general at longer horizons, whereas their recursive sophisticated scheme yields a Bellman-optimal state--action policy on any finite horizon. Their extension to the POMDP setting shows that under the same assumptions and in the limit as $\beta\to\infty$, EFE-minimising open-loop policies are also reward-maximising policies.

\section{Derivations of results in assessment duality}
\label{app:first_duality_misc}

This appendix summarises existing results that prove that each of the three families of non-standard Bellman equations recovered in \autoref{subsec:return-aggregation-duality} 
--- robust (\autoref{eq:robust_bellman_equation}), risk-sensitive (\autoref{eq:risk-sensitive}), and soft (\autoref{eq:soft-bellman-standard}) --- 
can be re-expressed as a standard Bellman recursion
with regularised returns. The proofs share a common structure, 
which is set in App.~\ref{app:unified-regularised} and then applied 
in Apps.~\ref{app:proof-soft}–\ref{app:proof-robust}. 

\subsection{A unified perspective: regularised returns and the Gibbs identity}
\label{app:unified-regularised}

The technical background of the proofs is the Donsker–Varadhan / Gibbs variational
identity \citep{ortega2013thermodynamics,geist2019theory,brekelmans2022your}. For any reference probability measure $\mu \in \Delta(\mathcal{X})$,
any measurable $f : \mathcal{X} \to \mathbb{R}$ with $\mathbb{E}_\mu[\exp(\beta f)] < \infty$, and any $\beta > 0$, then the following holds:
\begin{equation}
  \frac{1}{\beta} \log \mathbb{E}_{X \sim \mu}\!\left[\exp(\beta f(X))\right]
  \;=\; \sup_{\nu \in \Delta(\mathcal{X})}
        \left\{ \mathbb{E}_{X \sim \nu}\big[f(X)\big] - \frac{1}{\beta}\, D_{\mathrm{KL}}\!\left(\nu \,\|\, \mu\right) \right\},
  \label{eq:gibbs}
\end{equation}
with the supremum attained at the `Gibbs tilt'
\begin{equation}
  \frac{d\nu^{\star}}{d\mu}(x)
  \;=\; \frac{\exp\big(\beta f(x)\big)}{\mathbb{E}_\mu\big[\exp\big(\beta f(X)\big)\big]}.
  \label{eq:gibbs-optimal}
\end{equation}
For $\beta < 0$ the identity holds with the supremum replaced by an
infimum and the same attaining tilt, and the KL coefficient
$-1/\beta = 1/|\beta|$ remains non-negative. In the limit
$\beta \to -\infty$ the KL term in the variational form is dominated
by the expectation and the right-hand side of \autoref{eq:gibbs}
converges to a hard infimum of $f$ over the support of $\mu$. 
This is the bridge between the risk-sensitive and robust
formulations.

All three proofs that follow share the same template:
\begin{enumerate}
  \item Identify the non-linear aggregation appearing in the target
        Bellman equation.
  \item Apply \autoref{eq:gibbs} (or its $\beta\to-\infty$ limit) to
        rewrite that aggregation as a sup/inf of an ordinary
        expectation under an auxiliary measure $Q$, minus a divergence.
  \item Absorb the divergence into the local return, producing a
        conjugate $\ell$ that depends on $Q$.
  \item Verify that the resulting recursion is the standard Bellman
        recursion with $\mathbf{F}=\mathbb{E}$, $\oplus=\odot=+$, and
        conjugate return $\ell$.
\end{enumerate}
The auxiliary measure $Q$ plays a different role in each case — a
policy in the soft case, a transition kernel in the risk-sensitive
case, and a transition kernel constrained to an ambiguity set in the
robust case — but the structural derivation is identical. 
In the
language of the $(\beta_a,\beta_s)$ terms as used in \autoref{eq:two-param-bellman}, 
the various formulations result from applying \autoref{eq:gibbs} at either the
\emph{state} site (risk-sensitive: $\beta_a=+\infty$, $\beta_s$ finite),
the \emph{action} site (soft: $\beta_a$ finite, $\beta_s=0$), 
or in the limit at both sites (robust: $\beta_a=+\infty$, $\beta_s=-\infty$). 

Details of the calculations and corresponding references are provided below. 

\subsection{Proof of \autoref{eq:soft-bellman-standard} --- the soft Bellman equation}
\label{app:proof-soft}

The link between the soft Bellman equation and reward regularisation has been investigated by \cite{husain2021regularized} and \cite{brekelmans2022your} (see also \citep{Ziebart2010,Haarnoja2017,Levine2018,geist2019theory,nachum2017bridging,schulman2017equivalence}).

\begin{proof}
Let $\pi_0 \in \Pi_\phi$ be a reference policy with
$\pi_0(a \mid s) > 0$ wherever any candidate policy assigns positive
probability. The soft Bellman equation (\autoref{eq:soft-bellman-standard}) can be re-stated as 
\begin{equation}
  V^{\ast}_\text{soft}(s)
  \;=\; \frac{1}{\beta_a}
        \log \sum_{a \in \mathcal{A}} \pi_0(a \mid s)
        \exp\big({\beta_a Q^{\ast}_\text{soft}(s,a)}\big), 
  \label{eq:soft-recap}
\end{equation}
where we are using the soft Q-function
\begin{equation}
  Q^{\ast}_\text{soft}(s,a)
  \;:=\; r(s,a) + \gamma\, \mathbb{E}_{S'\sim K(\cdot \mid s,a)}\big[ V^{\ast}_\text{soft}(S')\big].
  \label{eq:soft-Q}
\end{equation}
Applying the Gibbs identity (\autoref{eq:gibbs}) at each $s$ with
$\mu = \pi_0(\cdot \mid s)$, $f = Q^{\ast}(s, \cdot)$, and
$\beta = \beta_a > 0$, gives the following identity:
\begin{equation}
  \frac{1}{\beta_a}\log \mathbb{E}_{A\sim \pi_0(\cdot \mid s)}\!\Big[\exp(\beta_a\, Q^{\ast}_\text{soft}(s,A))\Big]
  = 
  \!\!\sup_{\mu \in \Delta(\mathcal{A})}
  \!\!
        \left\{ \mathbb{E}_{A \sim \mu}\big[Q^{\ast}_\text{soft}(s,A)\big] - \frac{1}{\beta_a}\, D_{\mathrm{KL}}\!\big(\mu \,\|\, \pi_0(\cdot \mid s)\big) \right\}.
  \label{eq:soft-gibbs}
\end{equation}
Combining \autoref{eq:soft-recap} with \autoref{eq:soft-gibbs}, and
expanding 
$D_{\mathrm{KL}}(\mu\,\|\,\pi_0) = \mathbb{E}_{A\sim\mu}\!\big[\log(\mu(A)/\pi_0(A|s))\big]$, leads to
\begin{align}
  V^{\ast}(s)
  &= \sup_{\mu \in \Delta(\mathcal{A})}
     \mathbb{E}_{A \sim \mu}
     \!\left[\, r(s,A) + \gamma\, \mathbb{E}_{S' \sim K(\cdot|s,A)}\big[ V^{\ast}(S')\big]
                - \frac{1}{\beta_a}\log\frac{\mu(A)}{\pi_0(A | s)} \right] \nonumber \\
  &= \sup_{\mu \in \Delta(\mathcal{A})}
     \mathbb{E}_{A \sim \mu,\, S' \sim K(\cdot \mid s, A)}
     \!\Big[\, \ell_{\text{soft}}(s, A, S'; \mu) + \gamma\,  V^{\ast}(S') \,\Big],
  \label{eq:soft-bellman-conjugate}
\end{align}
where the \emph{soft local return} is
\begin{equation}  
    \ell_{\text{soft}}(s, a; \mu)
    \;=\; r(s, a) \;-\; \frac{1}{\beta_a}\,
          \log \frac{\mu(a \mid s)}{\pi_0(a \mid s)}.
  \label{eq:ell-soft-corrected}
\end{equation}
\autoref{eq:soft-bellman-conjugate} is a standard Bellman
optimality equation with aggregation $\mathbb{E}$, composition $+$,
transition kernel $K(\cdot|s,a)$, and local return $\ell_{\text{soft}}$. 
From \autoref{eq:gibbs-optimal}, the maximising policy can be seen to be the Gibbs tilt
\begin{equation}
  \mu^{\ast}(a \mid s)
  \;=\; \frac{\pi_0(a \mid s)\,\exp(\beta_a\, Q^{\ast}_\text{soft}(s,a))}
             {\sum_{a' \in \mathcal{A}} \pi_0(a' \mid s)\,\exp(\beta_a\, Q_\text{soft}^{\ast}(s,a'))}.
  \label{eq:soft-optimal-policy}
\end{equation}
\end{proof}

An equivalent derivation, used elsewhere in the literature
\citep{Haarnoja2017,Levine2018}, starts from
the trajectory-level conjugate return (assuming $\gamma=1$)
\begin{equation}
  \mathbf{J}^{\pi}_{\mathrm{conj}}(h) \;:=\; \mathbf{J}(h) \;-\; \frac{1}{\beta_a}\,
  \log \frac{\text{d}\mathbf{P}_\pi}{\text{d}\mathbf{P}_0}(h),
  \label{eq:conj-traj}
\end{equation}
so that $\mathbb{E}_{h\sim \mathbf{P}_\pi}[\mathbf{J}^{\pi}_{\mathrm{conj}}(h)]
= \mathbb{E}_{h\sim \mathbf{P}_\pi}[\mathbf{J}(h)] - \beta_a^{-1} D_{\mathrm{KL}}(\mathbf{P}_\pi\,\|\,\mathbf{P}_0)$.
Under \condD with a shared transition kernel, the dynamics terms cancel in the density ratio, and for $\pi \in \Pi_\phi$ one has
\begin{equation}
\log
\frac{\text{d}\mathbf{P}_\pi}{\text{d}\mathbf{P}_0}(h)
= 
\sum_{\tau\geq 0} 
\log
\frac{\pi\big(a_\tau\mid\phi(h_{\tau})\big)}{\pi_0\big(a_\tau\mid\phi(h_{\tau})\big)}
\,.    
\end{equation}
Thus, $\mathbf J^{\pi}_{\mathrm{conj}}$ satisfies \condR with the local return as given in \autoref{eq:ell-soft-corrected}; substituting this into
$\mathbf{T}_*$ (\autoref{eq:bellman-operator}) and applying \autoref{eq:gibbs} recovers
\autoref{eq:soft-recap}. The forward and reverse routes therefore close
the loop between the soft Bellman equation and its conjugate-return
representation. 

\subsection{Proof of \autoref{eq:risk-sensitive} --- the risk-sensitive Bellman equation}
\label{app:proof-risk}

The link between the risk-sensitive Bellman equation and reward regularisation has been investigated by \cite{petersen2000minimax,dupuis2000robust,hansen2011robustness,osogami2012robustness,bauerle2014more}.

\begin{proof}
The risk-sensitive Bellman equation (\autoref{eq:risk-sensitive}) reads
\begin{equation}
  V^{\ast}(s)
  \;=\; \max_{a \in \mathcal{A}}
        \left\{ r(s, a) + \frac{\gamma}{\beta_s}\,
        \log \mathbb{E}_{S' \sim K(\cdot \mid s, a)}\!\Big[\exp\big(\beta_s\, V^{\ast}(S')\big)\Big] \right\}.
  \label{eq:risk-recap}
\end{equation}
We treat the two regimes of $\beta_s$ separately. The two cases share
the same local return; only the $\textrm{sup}/\textrm{inf}$ alternation differs.

\textit{Risk-seeking case ($\beta_s > 0$).}
Applying \autoref{eq:gibbs} at each $(s, a)$ with
$\mu = p(\cdot | s, a)$, $f = V^{\ast}$, and $\beta = \beta_s$ leads to 
\begin{equation}
  \frac{1}{\beta_s}\log \mathbb{E}_{S' \sim K(\cdot \mid s, a)}\!\Big[\exp\big(\beta_s\, V^{\ast}(S')\big)\Big]
  \;=\; 
  \sup_{q \in \Delta(\mathcal{S})}
    \left\{ \mathbb{E}_{S' \sim q}\Big[V^{\ast}(S')\Big] - \frac{1}{\beta_s}\, D_{\mathrm{KL}}\!\big(q \,\big\|\, K(\cdot \mid s, a)\big) \right\}.
  \label{eq:risk-gibbs-pos}
\end{equation}
Substituting this expression into the right-hand side of \autoref{eq:risk-recap} leads to
\begin{align}
  V^{\ast}(s)
  &= \max_{a\in\mathcal A} \sup_{q\in \Delta(S)}
     \left\{ r(s, a) + \gamma\, \mathbb{E}_{S' \sim q}\big[V^{\ast}(S')\big]
             - \frac{\gamma}{\beta_s}\, D_{\mathrm{KL}}\!\big(q \,\|\, p(\cdot \mid s, a)\big) \right\} \nonumber \\
  &= \max_{a\in\mathcal A} \sup_{q\in \Delta(S)}
     \mathbb{E}_{S' \sim q}
     \!\Big[\, \ell_{\text{risk}}(s, a, S'; q) + \gamma\, V^{\ast}(S') \,\Big],
  \label{eq:risk-bellman-conjugate-pos}
\end{align}
where the risk-sensitive \emph{conjugate local return} is
\begin{equation}
    \ell_{\text{risk}}(s, a, s'; q)
    \;=\; r(s, a) \;+\; \frac{\gamma}{\beta_s}\,
          \log \frac{K(s' \mid s, a)}{q(s')}.
  \label{eq:ell-risk}
\end{equation}
Using \autoref{eq:gibbs-optimal} one can find that the distribution that extremises \autoref{eq:risk-bellman-conjugate-pos}, $q^{\ast}$, is the Gibbs tilt given by
\begin{equation}
  q^{\ast}(s' \mid s, a)
  \;=\; \frac{p(s' | s, a)\,\exp(\beta_s\, V^{\ast}(s'))}
             {\mathbb{E}_{s'' \sim K(\cdot \mid s, a)}[\exp(\beta_s\, V^{\ast}(s''))]},
\end{equation}
which concentrates on \emph{high}-value next states.

\textit{Risk-averse case ($\beta_s < 0$).}
The same calculation goes through with the supremum replaced by an
infimum. For $\beta_s < 0$, the Gibbs identity reads
\begin{equation}
  \frac{1}{\beta_s}\log \mathbb{E}_{S' \sim K(\cdot|s,a)}\!\big[\exp(\beta_s\, V^{\ast}(S'))\big]
  =\! \inf_{q \in \Delta(\mathcal{S})}
        \left\{ \mathbb{E}_{S' \sim q}[V^{\ast}(S')] - \frac{1}{\beta_s}\, D_{\mathrm{KL}}\!\big(q \,\|\, K(\cdot \mid s, a)\big) \right\},
  \label{eq:risk-gibbs-neg}
\end{equation}
where $-1/\beta_s = 1/|\beta_s| > 0$, so that the KL term still enters
with a non-negative coefficient. Substituting this into \autoref{eq:risk-recap} yields
\begin{equation}
  V^{\ast}(s)
  \;=\; \max_{a\in\mathcal A} \inf_{q\in \Delta(S)}
        \mathbb{E}_{S'\sim q}
        \!\Big[\, \ell_{\text{risk}}(s, a, S'; q) + \gamma\, V^{\ast}(S') \,\Big],
  \label{eq:risk-bellman-conjugate-neg}
\end{equation}
with the \emph{same} local return as \autoref{eq:ell-risk}. There, the sign of $\beta_s$ is preserved inside $\ell_{\text{risk}}$, so that the coefficient $-\gamma/\beta_s = \gamma/|\beta_s|$ in front of
$\log(q/p)$ is positive, and the KL penalty $\gamma|\beta_s|^{-1} D_{\mathrm{KL}}(q\,\|\,p)$ is non-negative. The distribution that extremises \autoref{eq:risk-bellman-conjugate-neg}, $q^{\ast}$, is again a Gibbs tilt, but now it is concentrated on low-value next states --- the agent maximises against an adversarial nature whose deviations from $p$ are KL-penalised.

\end{proof}

\subsection{Proof of \autoref{eq:robust_bellman_equation} --- the robust Bellman equation}
\label{app:proof-robust}

The link between the robust Bellman equation and reward regularisation has been investigated by \cite{husain2021regularized,derman2021twice}.

\begin{proof}
The robust Bellman equation (\autoref{eq:robust_bellman_equation}) reads
\begin{equation}
  V^{\ast}(s)
  \;=\; \max_{a \in \mathcal{A}}
        \left\{ r(s, a) + \gamma \min_{q \in \mathcal{Q}(s, a)}
        \mathbb{E}_{S' \sim q}\Big[V^{\ast}(S')\Big] \right\},
  \qquad \mathcal{Q}(s, a) \subseteq \Delta(\mathcal{S}),
  \label{eq:robust-recap}
\end{equation}
where $\mathcal{Q}(s, a)$ is the (rectangular) ambiguity set at $(s, a)$. 
Let us introduce the convex-analytic indicator
\begin{equation}
  \mathcal{I}_{\mathcal{Q}(s,a)}(q)
  \;=\; \begin{cases} 0 & \text{if } q \in \mathcal{Q}(s, a), \\
                       +\infty & \text{if } q \notin \mathcal{Q}(s, a). \end{cases}
\end{equation}
For any function $V : \mathcal{S} \to \mathbb{R}$,
\begin{equation}
  \min_{q \in \mathcal{Q}(s, a)} \mathbb{E}_{S' \sim q}\Big[V(S')\Big]
  \;=\; \inf_{q \in \Delta(\mathcal{S})}
        \Big\{ \mathbb{E}_{S'\sim q}\Big[V(S')\Big] + \mathcal{I}_{\mathcal{Q}(s,a)}(q) \Big\},
  \label{eq:hard-indicator}
\end{equation}
because any $q \notin \mathcal{Q}(s, a)$ contributes $+\infty$ to the
right-hand side and is excluded from the infimum. Substituting
\autoref{eq:hard-indicator} into \autoref{eq:robust-recap} yields
\begin{align}
  V^{\ast}(s)
  &= \max_{a\in\mathcal A} \inf_{q \in \Delta(\mathcal{S})}
     \Big\{ r(s, a) 
     + \gamma\, \mathbb{E}_{S' \sim q}\Big[V^{\ast}(S')\Big]
     + \gamma\, \mathcal{I}_{\mathcal{Q}(s, a)}(q) \Big\} \nonumber \\
  &= \max_{a\in\mathcal A} \inf_{q \in \Delta(\mathcal{S})}
     \mathbb{E}_{S'\sim q}
     \!\Big[\, \ell_{\text{robust}}(s, a, S'; q) + \gamma\, V^{\ast}(S') \,\Big],
  \label{eq:robust-bellman-conjugate}
\end{align}
where the robust \emph{conjugate local return} is
\begin{equation}
    \ell_{\text{rob}}(s, a, s'; q)
    \;=\; r(s, a) \;+\; \mathcal{I}_{\mathcal{Q}(s, a)}(q).
  \label{eq:ell-robust-corrected}
\end{equation}
Above, the $\gamma$ multiplying the indicator has been absorbed into
$\mathcal{I}_{\mathcal{Q}}$ (since $\gamma\cdot(+\infty)=+\infty$ and
$\gamma\cdot 0=0$). The distribution extremising \autoref{eq:robust-bellman-conjugate} at $(s, a)$, $q^{\ast}$, is any
element of $\mathcal{Q}(s, a)$ that attains
$\min_{q\in\mathcal{Q}}\mathbb{E}_q[V^{\ast}]$.
\end{proof}

Note that the robust Bellman equation is the singular limit $\beta_s \to -\infty$ of the risk-sensitive Bellman equation under a KL-ball ambiguity set. 
Also, note that some of these different regularisations can be combined; for instance, one can add KL-like terms over policies and transitions to obtain soft risk-sensitive solutions~\citep{derman2021twice}. 

\section{Distortion aggregators as deformed expectations}
\label{app:distortion-deformation}

This appendix proves that aggregators that can be written as \autoref{eq:choquet_aggregator} can be expressed as an ordinary expectation of the same return $\mathbf J$ under deformed dynamics $\mathbf Q$, and gives a general formula for $\mathbf Q$ in terms of the distortion function $g$. 

For this purpose, let us focus on the case of real returns $\mathbf J(H)\in\mathbb R$ where $H\sim \mathbf P_\pi$. 
The return cumulative distribution
function, survival function, and lower quantile function are given by
\begin{equation}
F_{\mathbf J}(x)=P_\pi(\mathbf J\le x),\qquad
S_{\mathbf J}(x)=P_\pi(\mathbf J> x)=1-F_{\mathbf J}(x),\qquad
F_{\mathbf J}^{-1}(u)=\inf\{x:\ F_{\mathbf J}(x)\ge u\}.
\end{equation}
A \emph{distortion} is an increasing function $g:[0,1]\to[0,1]$ with $g(0)=0$ and $g(1)=1$.
The associated aggregator is the Choquet integral of $\mathbf J$ against the capacity
$A\mapsto g(P_\pi(A))$,
\begin{equation}
\tilde{\mathbf{F}}^g[\mathbf J]
:=\int \mathbf J\,\mathrm{d}(g\circ \mathbf P_\pi)
=\int_0^{\infty} g\big(S_{\mathbf J}(x)\big)\,\mathrm{d}x
-\int_{-\infty}^{0}\Big(1-g\big(S_{\mathbf J}(x)\big)\Big)\,\mathrm{d}x .
\label{eq:choquet-def}
\end{equation}
We will use the bounded-support form of~\autoref{eq:choquet-def}: if $a\le \mathbf J\le b$ almost surely, then the translation equivariance of the Choquet integral\footnote{That is, the fact that $\mathbf{F}_g[c+Z]=c+\mathbf{F}_g[Z]$, which holds because $g\circ \mathbf P_\pi$ is normalised.} together with the tail-integral formula\footnote{The tail-integral formula says
$\mathbb E[Y]\;=\;\int_0^\infty \mathbb P(Y>x)\,\mathrm dx\;=\;\int_0^\infty S_Y(x)\,\mathrm dx$ with $S_Y(x)=\mathbb P(Y>x)$.} for the non-negative variable $\mathbf J-a$ gives
\begin{equation}
\tilde{\mathbf{F}}^g[\mathbf J]\;=\;a+\int_a^b g\big(S_{\mathbf J}(x)\big)\,\mathrm{d}x .
\label{eq:choquet-bounded}
\end{equation}

\subsection{The smooth case: an explicit density}

\begin{proposition}[Deformed-measure representation]
\label{prop:density}
Assume $g\in C^1([0,1])$ (so $g'\ge 0$) and that $\mathbf J$ is bounded with continuous distribution
function $F_{\mathbf J}$. Define a measure $\mathbf Q$ on $\mathcal{T}$ by
\begin{equation}
\frac{\mathrm{d}\mathbf Q}{\mathrm{d}\mathbf P_\pi}
\;=\;
g'\big(S_{\mathbf J}(\mathbf J)\big),
\qquad
S_{\mathbf J}(\mathbf J):\ \omega\mapsto S_{\mathbf J}\!\big(\mathbf J(\omega)\big)=1-F_{\mathbf J}\!\big(\mathbf J(\omega)\big).
\label{eq:density}
\end{equation}
Then $\mathbf Q$ is a probability measure on $\mathcal{T}$, $\mathbf Q\ll \mathbf P_\pi$, and
\begin{equation}
\tilde{\mathbf{F}}_g[\mathbf J]\;=\;\mathbb{E}_{\mathbf Q}[\mathbf J]\;=\;\mathbb{E}_{\mathbf P_\pi}\!\big[\mathbf J\,g'(S_{\mathbf J}(\mathbf J))\big].
\label{eq:claim-density}
\end{equation}
\end{proposition}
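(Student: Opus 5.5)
The plan is to reduce everything to the uniform random variable $U:=F_{\mathbf J}(\mathbf J)$. Because $F_{\mathbf J}$ is continuous, the probability integral transform gives $U\sim\mathrm{Unif}[0,1]$ under $\mathbf P_\pi$. The density in \autoref{eq:density} is then $g'(1-U)$. Non-negativity is immediate from $g'\ge 0$, which already gives $\mathbf Q\ll\mathbf P_\pi$. Normalisation follows from
\[
\mathbb E_{\mathbf P_\pi}[g'(1-U)]=\int_0^1 g'(1-u)\,\mathrm du=g(1)-g(0)=1 ,
\]
so $\mathbf Q$ is a probability measure. The density is bounded, since $g'$ is continuous on $[0,1]$, and $\mathbf J$ is bounded, so every expectation below is finite.

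For the identity \autoref{eq:claim-density}, I will start from the bounded-support form \autoref{eq:choquet-bounded} with $a\le\mathbf J\le b$. I write $g(S_{\mathbf J}(x))=\int_0^{S_{\mathbf J}(x)} g'(v)\,\mathrm dv$ and then re-express this as an expectation over the trajectory. Again by continuity of $F_{\mathbf J}$, the event $\{\mathbf J>x\}$ coincides, up to a $\mathbf P_\pi$-null set, with $\{1-U< S_{\mathbf J}(x)\}$. This should yield
\[
g(S_{\mathbf J}(x))=\mathbb E_{\mathbf P_\pi}\!\big[g'(1-U)\,\mathbf 1\{\mathbf J>x\}\big]=\mathbf Q(\mathbf J>x).
\]
Substituting into \autoref{eq:choquet-bounded} gives $\tilde{\mathbf F}^g[\mathbf J]=a+\int_a^b \mathbf Q(\mathbf J>x)\,\mathrm dx$. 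The tail-integral formula, applied under $\mathbf Q$ to the non-negative variable $\mathbf J-a$, then gives $a+\mathbb E_{\mathbf Q}[\mathbf J-a]=\mathbb E_{\mathbf Q}[\mathbf J]$; this uses that $\mathbf Q$ is a probability measure, proved above. The second equality in \autoref{eq:claim-density} is the definition of $\mathbf Q$ via its Radon--Nikodym derivative.

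The main obstacle is the event identification $\{\mathbf J>x\}=\{S_{\mathbf J}(\mathbf J)<S_{\mathbf J}(x)\}$ almost surely. Since $S_{\mathbf J}$ is only non-increasing, the two events can differ when $S_{\mathbf J}$ is flat on an interval. On such a flat piece, the set of trajectories with $\mathbf J$ in the corresponding interval has $\mathbf P_\pi$-probability zero, so the discrepancy is null. Concretely:

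\begin{itemize}
\item If $\mathbf J(\omega)>x$, then $S_{\mathbf J}(\mathbf J(\omega))\le S_{\mathbf J}(x)$. Equality here forces $\mathbf J(\omega)$ into a flat region above $x$, which is a null set.
\item Conversely, $S_{\mathbf J}(\mathbf J(\omega))<S_{\mathbf J}(x)$ forces $\mathbf J(\omega)>x$, by monotonicity.
\end{itemize}

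With this, $\mathbb E_{\mathbf P_\pi}[g'(1-U)\mathbf 1\{1-U<s\}]=\int_0^s g'(v)\,\mathrm dv=g(s)$ for $s=S_{\mathbf J}(x)$, using uniformity of $1-U$. Finally, I will invoke Fubini to justify exchanging the $x$-integral and the expectation. This is legitimate because the integrand is non-negative and bounded on $[a,b]\times\mathcal T$.
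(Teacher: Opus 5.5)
Your proof is correct, and for the main identity it takes a different route from the paper. The normalisation step via $U=F_{\mathbf J}(\mathbf J)\sim\mathrm{Unif}[0,1]$ is the same in both.

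The paper goes through the quantile (spectral) representation $\tilde{\mathbf F}^g[\mathbf J]=\int_0^1 F_{\mathbf J}^{-1}(p)\,g'(1-p)\,\mathrm dp$. It obtains this by substituting $p=F_{\mathbf J}(x)$ in \autoref{eq:choquet-bounded} and integrating by parts. It then couples $\mathbf J=F_{\mathbf J}^{-1}(U)$ almost surely and reads the result off as $\mathbb E_{\mathbf P_\pi}[\mathbf J\,g'(1-U)]$.

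You instead prove the identity $\mathbf Q(\mathbf J>x)=g\big(S_{\mathbf J}(x)\big)$ for every $x$, and finish with the tail-integral formula under $\mathbf Q$. Your handling of flat pieces of $S_{\mathbf J}$ via the almost-sure event identification is correct. What your route buys:
\begin{itemize}
\item A stronger and more transparent statement: $\mathbf Q$ is exactly the measure under which $\mathbf J$ has survival function $g\circ S_{\mathbf J}$. The distortion therefore acts on the law of $\mathbf J$, not just on its expectation.
\item It needs no quantile functions, no Stieltjes change of variables, and no integration-by-parts boundary term. In that boundary term the paper has to set $F_{\mathbf J}^{-1}(0)=a$ by convention, since the stated lower-quantile definition gives $-\infty$ there.
\item It extends essentially verbatim to unbounded $\mathbf J$ with $\mathbb E_{\mathbf P_\pi}|\mathbf J|<\infty$ via the two-sided form \autoref{eq:choquet-def}, because $1-g(S_{\mathbf J}(x))=\mathbf Q(\mathbf J\le x)$ and $g'$ is bounded.
\end{itemize}
The paper's route, in turn, produces the spectral form as a by-product. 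That is the standard Yaari/distortion-risk-measure representation, and it makes the CVaR example (the average of the lowest $\alpha$-quantiles) immediate.

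One small remark: your separate appeal to Fubini is redundant, since the only interchange of integrals is the one already contained in the tail-integral formula.
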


\begin{proof}
Let us first prove that $\mathbf Q$ is a probability measure. 
Because $F_{\mathbf J}$ is continuous, the probability integral transform gives
$U:=F_{\mathbf J}(\mathbf J)\sim\mathrm{Unif}[0,1]$, hence $S_{\mathbf J}(\mathbf J)=1-U\sim\mathrm{Unif}[0,1]$. The density
$g'(S_{\mathbf J}(\mathbf J))$ is non-negative ($g$ increasing) and
\begin{equation}
\mathbb{E}_{\mathbf P_\pi}\!\big[g'(S_{\mathbf J}(\mathbf J))\big]
=\int_0^1 g'(v)\,\mathrm{d}v
=g(1)-g(0)=1 .
\end{equation}
Thus, $\mathbf Q$ defined by~\autoref{eq:density} is a probability measure absolutely continuous with respect to $\mathbf P_\pi$.

As a second step, let us show that 
\begin{equation}
\tilde{\mathbf{F}}^g[\mathbf J]\;=\;\int_0^1 F_{\mathbf J}^{-1}(p)\,g'(1-p)\,\mathrm{d}p .
\label{eq:master}
\end{equation}
Let $a\le \mathbf J\le b$. Starting from~\autoref{eq:choquet-bounded} and substituting $p=F_{\mathbf J}(x)$ results in 
\begin{equation}
\int_a^b g\big(S_{\mathbf J}(x)\big)\,\mathrm{d}x
=\int_a^b g\big(1-F_{\mathbf J}(x)\big)\,\mathrm{d}x
=\int_0^1 g(1-p)\,\mathrm{d}F_{\mathbf J}^{-1}(p).
\end{equation}
Integrating by parts, with $\mathrm{d}\!\left[g(1-p)\right]=-g'(1-p)\,\mathrm{d}p$,
\begin{equation}
\int_0^1 g(1-p)\,\mathrm{d}F_{\mathbf J}^{-1}(p)
=\Big[g(1-p)\,F_{\mathbf J}^{-1}(p)\Big]_{0}^{1}
+\int_0^1 F_{\mathbf J}^{-1}(p)\,g'(1-p)\,\mathrm{d}p .
\end{equation}
The boundary term is $g(0)F_{\mathbf J}^{-1}(1)-g(1)F_{\mathbf J}^{-1}(0)=0\cdot b-1\cdot a=-a$. Substituting
into~\eqref{eq:choquet-bounded},
\begin{equation}
\tilde{\mathbf{F}}^g[\mathbf J]=a+\Big(-a+\int_0^1 F_{\mathbf J}^{-1}(p)\,g'(1-p)\,\mathrm{d}p\Big)
=\int_0^1 F_{\mathbf J}^{-1}(p)\,g'(1-p)\,\mathrm{d}p,
\end{equation}
proving \autoref{eq:master}.

To conclude the proof, couple $\mathbf J$ to its own quantile function through $U=F_{\mathbf J}(\mathbf J)\sim\mathrm{Unif}[0,1]$, so that
$\mathbf J=F_{\mathbf J}^{-1}(U)$ almost surely and $S_{\mathbf J}(\mathbf J)=1-U$. Then
\begin{equation}
\mathbb{E}_{\mathbf P_\pi}\!\big[\mathbf J\,g'(S_{\mathbf J}(\mathbf J))\big]
=\mathbb{E}\big[F_{\mathbf J}^{-1}(U)\,g'(1-U)\big]
=\int_0^1 F_{\mathbf J}^{-1}(u)\,g'(1-u)\,\mathrm{d}u .
\end{equation}
By~\autoref{eq:master}, this equals $\tilde{\mathbf{F}}^g[\mathbf J]$, proving~\autoref{eq:claim-density}.
\end{proof}

\autoref{eq:density} reweights each outcome by $g'$ evaluated at that outcome's own
upper-tail rank $S_{\mathbf J}(\mathbf J)\in[0,1]$. Outcomes in the lower tail of $\mathbf J$ and outcomes in the upper tail are up- or down-weighted according to the
local slope of $g$ there. 

The representation provided in \autoref{eq:density} exists for every distortion function $g$. 
When $g$ is not differentiable (e.g.\ the kinked $g$ of \textsc{CVaR}) or $\mathbf J$ has atoms, then \autoref{eq:density} must be replaced by a Lebesgue--Stieltjes construction. For technical details, please see \cite{follmer2025stochastic,dhaene2006risk}.

\subsection{Example: conditional value at risk}

Take the worst-tail CVaR at level $\alpha\in(0,1)$, i.e.\ the average of the lowest
$\alpha$-fraction of returns. Its distortion and (a.e.) derivative are
\begin{equation}
g(s)=\tfrac{1}{\alpha}\big(s-(1-\alpha)\big)^{+},
\qquad
g'(s)=\tfrac{1}{\alpha}\,\mathbf{1}\{s>1-\alpha\},
\end{equation}
so that $g(1-\alpha)=0$, $g(1)=1$. The density in \autoref{eq:density} becomes
\begin{equation}
\frac{\mathrm{d}\mathbf Q}{\mathrm{d}\mathbf P_\pi}
=\tfrac{1}{\alpha}\,\mathbf{1}\big\{S_{\mathbf J}(\mathbf J)>1-\alpha\big\}
=\tfrac{1}{\alpha}\,\mathbf{1}\big\{F_{\mathbf J}(\mathbf J)<\alpha\big\}
=\tfrac{1}{\alpha}\,\mathbf{1}\big\{\mathbf J\le q_\alpha(\mathbf J)\big\},
\end{equation}
the reweighting onto the worst $\alpha$-tail used in the main text. The kink of $g$ at
$s=1-\alpha$ is harmless when $F_J$ is continuous, since $S_J(J)\sim\mathrm{Unif}[0,1]$ places
no mass at the single point $1-\alpha$. 

\section{Related work}
\label{sec:related-work}

The framework developed in this paper synthesises and extends a
substantial body of prior work across reinforcement learning,
optimal control, statistics, theoretical neuroscience, and category
theory. \autoref{sec:three-fold} discusses various formal antecedents, including active inference, control-as-inference, the soft Bellman equation, risk-sensitive, robust, and entropy-regularised families. In this section, we situate the framework within other research programmes from which these results are
drawn, addressing those aspects of the literature that are
conceptual or programmatic rather than formal, and
indicating in each case where our contribution relates to existing work.

\subsection{Connections to related reinforcement learning frameworks}

The ideas developed in this paper are complementary to several existing research directions that modify, extend, or generalise Bellman recursion. 

For instance, classic work on general value functions \citep{sutton2011horde} extends value functions beyond cumulative reward to arbitrary predictive questions specified by cumulants and termination conditions. From the perspective of the present work, this corresponds to changing the object propagated through Bellman recursion while preserving its recursive structure. 
Reward machines \citep{icarte2018using} propose to augment the environment by including automaton states that encode task progress, thereby transforming non-Markovian reward specifications into Markovian ones. Reward machines can be seen as an example of restoring \condR through state augmentation.

\cite{RePEc} study which aggregation operators admit recursive dynamic-programming formulations, while \cite{tang2025recursive} develop a general framework for recursively aggregating rewards beyond the additive discounted sum. These works investigate which reward aggregators admit recursive dynamic programming formulations; by contrast, aggregation duality assumes such a recursive formulation exists and studies equivalence transformations between different Bellman recursions.

\cite{oh2020discovering} propose to automatically discover Bellman-style update rules through automatic search over Bellman update rules, providing an interesting complement to the view taken in this work. Their work treats the Bellman backup itself as an object to be optimised over a space of candidate operators. 
Finally, \cite{rahme2019theoretical} reformulates reinforcement learning in terms of partition functions and free energies rather than conventional value functions. By doing this, this work is closely related to object duality.

\subsection{Bayes-adaptive RL, empowerment, and information-theoretic objectives}
\label{subsec:rw-bayes-adaptive}

A related literature treats epistemic (belief-related) and pragmatic
(reward-related) value symmetrically by absorbing exploration into
the reward structure. \emph{Bayes-adaptive MDPs} (BAMDPs), introduced
by \citet{Duff2002} and developed by
\citet{GhavamzadehMannorPineauTamar2015,ZintgrafShialaIgl2020VariBAD},
lift the planning problem to belief-space so that the agent's
posterior over MDP parameters becomes part of the state.
\emph{Information-directed sampling}
\citep{RussoVanRoy2014,RussoVanRoy2018} trades off regret and
information gain in a unified objective. \emph{Empowerment},
introduced by \citet{KlyubinPolaniNehaniv2005} and developed by
\citet{SalgeGlackinPolani2014}, defines an information-theoretic
intrinsic reward as the channel capacity from actions to future
states. Related lines of work include \emph{curiosity-driven exploration}
\citep{PathakAgrawalEfros2017} and \emph{intrinsic motivation}
\citep{OudeyerKaplan2009}.

These approaches operationalise specific instances of the
belief--reward symmetry without developing it as a structural
principle. Empowerment is a particularly clean example: the reward
is defined as a functional of the belief over future states, which
is a direct instance of the object-swap duality applied to a
particular class of objectives. BAMDPs are instances of state
augmentation restoring \condD on a posterior-augmented state. Our
framework provides the structural language in which these
constructions can be understood as choices of
$(\mathbf{J}, \mathbf{F})$ pairs satisfying \condR and \condA with appropriate state augmentation where necessary.
We do not derive any new exploration algorithm, and view this
literature as a source of concrete instantiations of our framework
rather than a competitor to it.

\subsection{Categorical approaches}
\label{subsec:rw-categorical}

A line of work treats Markov decision processes from the perspective
of universal algebra, coalgebra, and category theory.
\citet{FeysHansenMoss2018} formalise an MDP as a coalgebra
$m = \langle u, t \rangle : S \to \mathbb{R} \times (\Delta S)^A$,
combining reward and transition functions, and develop a coinductive
theory of long-term value. Their two main contributions are a new
proof principle they call \emph{contraction coinduction}, based on
Banach's fixpoint theorem, and a characterisation of the discounted
value function as the unique morphism into a final coalgebra via a
generalised notion of \emph{corecursive algebra}. 

A more recent and rapidly developing research programme is the
\emph{categorical cybernetics} approach to RL \citep{HedgesRodriguezSakamoto2023ValueIteration,HedgesRodriguezSakamoto2024RL}.
The starting observation is that value iteration can be represented
as precomposition with a specific \emph{optic} --- a categorical
structure originating in lens theory 
\citep{ClarkeElkinsHedges2024Optics,Riley2018Optics}. These works extend this 
to parametrised optics that apply to action-value
functions and depend on samples, treating major RL algorithms
(value iteration, policy iteration, Q-learning, deep Q-learning,
actor-critic) as instances of a categorical construction in
which the Bellman update is the backward pass of an optic
representing the agent--environment interaction. 

The categorical literature provides what is, in principle, the
natural mathematical home for the framework developed in this paper.
The conditions D, R, and A that we formulate are the kinds of structures that Markov categories
\citep{Fritz2020Synthetic} can naturally accommodate. The recursive
decomposition of value is related to the corecursive-algebra construction of
\citet{FeysHansenMoss2018}. 
Our framework should be
understood as a non-categorical presentation of structures that
admit a categorical formulation. 
Our contribution relative to this literature is twofold. First, the
existing categorical literature treats one or another of the three
dualities at a time: for example, \citet{FeysHansenMoss2018} develop the
coalgebraic structure of the dynamics--reward pair without
addressing the aggregation choice. 
Second, our framework is formulated in elementary terms, 
which makes it accessible to
the broader optimal-control and RL communities while remaining
compatible with the categorical formulations. We view the explicit
categorical development of the triadic synthesis as a promising direction for future work  
--- in particular, an articulation of the universal property satisfied by the Bellman
state $\phi$ when all three conditions hold simultaneously.

\end{document}